\documentclass[conference]{IEEEtran}
\usepackage{times}

\usepackage[numbers]{natbib}
\usepackage{multicol}
\usepackage[bookmarks=true]{hyperref}
\usepackage{amsmath} 
\usepackage{amssymb}  
\usepackage{subcaption}
\usepackage{graphicx} 
\usepackage{mathtools, nccmath}
\usepackage{algorithm}
\usepackage{algpseudocode}
\usepackage{tikz}
\usetikzlibrary {arrows.meta}
\usepackage{siunitx}

\newtheorem{theorem}{Theorem}[section]

\newtheorem{lemma}[theorem]{Lemma}
\newenvironment{proof}[1][Proof]{%
  \noindent\textbf{#1.}\ \ignorespaces
}{%
  \hfill$\square$\par\medskip
}
 
\newcommand\blfootnote[1]{%
  \begingroup
  \renewcommand\thefootnote{}%
  \NoHyper\footnote{#1}\endNoHyper%
  \addtocounter{footnote}{-1}%
  \endgroup
}
\DeclareUrlCommand\url{\color{purple}}

\begin{document}

\title{Sampling-Based Follow-the-Leader Motion Planning for Manipulator-Mounted Continuum Robots}


\author{
Chengnan Shentu$^{*}$, Nicholas Baldassini$^{*}$, Oluwagbotemi D. Iseoluwa, Radian Gondokaryono, Jessica Burgner-Kahrs\\
University of Toronto\\
\url{https://continuumroboticslab.github.io/sb-ftl-cr-planner/}
}


\twocolumn[{%
	\renewcommand\twocolumn[1][]{#1}%
	\maketitle
    \vspace{-4mm}
	\begin{center}
		\includegraphics[width=17.5cm, height=5.5cm]{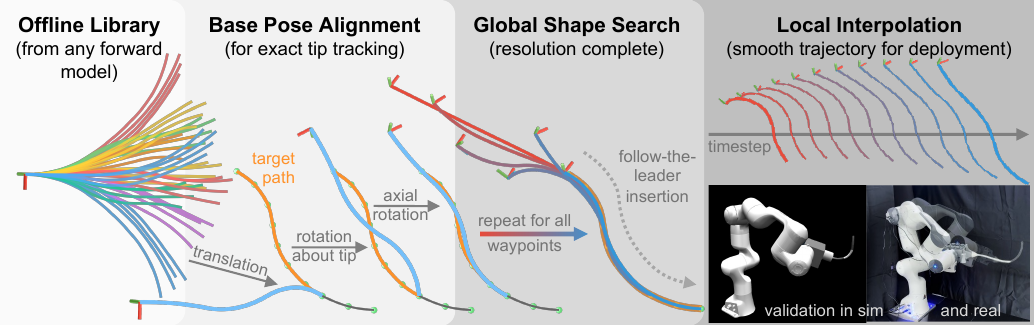}
		\captionof{figure}{\textbf{Overview} of our sampling-based follow-the-leader motion planning framework. 
        \textbf{Offline Library:} shapes are precomputed from any forward model. 
        \textbf{Base Pose Alignment:} for each candidate shape (blue), geometric transformations align the robot tip to the current waypoint (green dots) and shape along the target path (orange curve). Fully-actuated $SE(3)$ base poses are shown as coordinate axes. 
        \textbf{Global Shape Search:} the best-matching shape is selected for each waypoint, producing a sparse plan (red to blue gradient indicates progression). 
        \textbf{Local Interpolation:} smooth interpolation between waypoints yields a dense trajectory suitable for deployment, validated in simulation and on hardware.}
    \label{fig:teaser}
	\end{center}
}]

\IEEEpeerreviewmaketitle
\blfootnote{$*$ Indicates equal contribution. Correspondence to {\tt\small \{c.shentu, nicholas.baldassini\}@mail.utoronto.ca}.}

\begin{abstract}
Follow-the-leader (FTL) motion exploits the unique morphology of continuum robots (CRs) to navigate confined spaces by having the body retrace the path of the tip. 
While extensively studied, existing FTL methods typically assume a fixed base or a single degree-of-freedom insertion mechanism, limiting their applicability to practical systems in which CRs are mounted on robotic manipulators with fully actuated $SE(3)$ base pose.
This paper presents a sampling-based motion planner for FTL motion of manipulator-mounted CRs that jointly considers robot configuration and base pose. The key idea is to decouple global shape search from base pose determination by computing the base pose through a closed-form geometric construction, thereby avoiding iterative optimization during online planning. The approach supports general forward models and enables efficient planning by shifting the majority of computation offline.
We establish theoretical guarantees including resolution complete shape search and converging tip tracking throughout waypoint traversal and interpolation.
Experiments on 120 simulated paths over 3 test classes demonstrate 0\% tip error and  1.9\% mean shape deviation (w.r.t. robot length) at 100\% success rate. 
We validate the practicality of our approach on a 6-DOF tendon-driven CR mounted on a serial manipulator.
Code and visualization available at \href{https://continuumroboticslab.github.io/sb-ftl-cr-planner/}{the project website}.
\end{abstract}

\section{Introduction}
\label{sec:intro}

Continuum robots (CRs) are flexible, high-DOF robots inspired by biological structures such as elephant trunks and octopus arms, with promising potential for navigating confined, tortuous spaces inaccessible to rigid-link robots~\cite{robinson_1999_continuum}. 
Their unique capabilities have led to increasing deployment in minimally invasive surgery~\cite{burgner_2015_continuum_robots_medical, dupont_2022_continuum_robots_medical}, industrial inspection~\cite{wang_2021_deisgn_modelling_validation} and search-and-rescue operations~\cite{yamauchi_2022_development}. 
A key motion primitive enabling these applications is follow-the-leader (FTL), where the robot body follows the path traced by its tip during insertion~\cite{neumann_2016_considerations_ftl}. 
FTL motion minimizes lateral forces on surrounding tissue or structures, reducing trauma in surgical procedures and avoiding collisions in cluttered environments.

FTL motion planning has been studied extensively for various continuum and soft robot architectures. 
Early work established the foundational concepts for snake-like robots~\cite{choset_1999_ftl_approach}. 
Steerable needles~\cite{cowan_2010_robotic_needle_steering} and vine robots~\cite{coad_2020_vine_robots} are designed to operate with FTL motion, achieving excellent results because they naturally extend during deployment. 
Similar principles have been applied to concentric tube robots (CTRs), where FTL is achieved through selection of suitable tube parameters and coordinated tube extension~\cite{gilbert_2015_ctr_steerable_needles, garriga_2018_complete_ftl_kinematics, girerd_2020_slam_ftl, xu_2024_shape_control_ctr}.

For tendon-driven continuum robots (TDCRs), two main approaches have emerged. 
The first adds degrees of freedom to the TDCR design itself through extensible segments~\cite{amanov_2021_tdcr_extensible_sections}, active stiffness modulation~\cite{qian_2025_jammingsnake}, magnetic actuation~\cite{mao_2024_magnetic_steering}, or a combination such as the interlaced design with extensible segments and internal locking mechanism~\cite{kang_2016_first_interlaced_cr, wang_2021_ftl_deployment_interlaced}. 
While effective, these hardware-based solutions are difficult to generalize across applications as the additional mechanisms occupy valuable space that could otherwise accommodate payloads such as sensors, tools, or delivery channels.

The second approach addresses the more general case by introducing degrees of freedom at the TDCR base and coordinating them with the CR's configuration through differential~\cite{gao_2020_cr_ftl_edoscopic}, heuristic-based~\cite{mohammad_2021_efficient_ftl, ma_2025_geometric_iterative}, or optimization-based inverse kinematics~\cite{palmer_2014_realtime_method_tip, gao_2021_ftl_motion_strategy, xiao_2026_mobile_cr}. 
However, these methods share common limitations. 
All rely on simplified kinematics such as the piece-wise constant curvature (PCC) model~\cite{jones_2006_kinematics_multisection_cr}, which does not accurately capture the behavior of physical TDCRs. 
Additionally, most employ local solvers or iterative optimization, which can converge to suboptimal solutions in the high-dimensional, nonlinear configuration space of CRs and are sensitive to initialization. 
Such unpredictability can be a limitation for safety-critical applications where predictable planner behavior is important.

These limitations are compounded by a confining assumption shared by nearly all existing methods: the robot base is either fixed in space or constrained to 1-DOF linear insertion. 
This does not reflect the growing class of practical deployments where continuum robots are mounted on robotic manipulators with full 6-DOF base pose control. 
In surgical systems, flexible endoscopes have been integrated with robot arms for precise positioning~\cite{wilkening_2017_concurrent_control, berthet_2018_i2snake,chen_2024_design}. 
In industrial inspection, snake-like robots have been mounted on serial manipulators~\cite{buckingham_2012_nuclear}, mobile platforms~\cite{xiao_2026_mobile_cr}, and even aerial vehicles~\cite{peng_2025_dexterous} to extend reach and dexterity.

In this paper, we consider the FTL motion planning problem for CRs whose base pose is fully actuated in SE(3), as arises when a CR is mounted on a serial manipulator. Given a spatial path specified by ordered waypoints, the objective is to generate a continuous motion such that (i) the robot tip tracks the path during insertion, and (ii) the inserted portion of the robot body follows the path with minimal deviation. 
While a fully actuated base offers greater flexibility and expanded motion capability, it also introduces computational challenges by expanding the planning space by six additional dimensions.

We address this problem with a sampling-based framework that decouples global shape search from local interpolation. The key enabler is a closed-form geometric construction that uniquely determines the 6-DOF base pose for any selected shape, guaranteeing exact tip placement without iterative optimization. This decoupling allows shape search to be performed via global search over a precomputed library—compatible with any forward model—while interpolation between waypoints exploits radial symmetry to ensure smooth transitions with a small number of forward model evaluations.

To summarize our contribution, we present a FTL motion planning framework for manipulator-mounted CRs with exact tip tracking by construction. The framework supports general forward models beyond PCC while enabling efficient online planning through user-adjustable clustering. We establish formal guarantees including resolution completeness for shape search and asymptotic convergence for tip tracking, and validate the approach in simulation and on hardware. The implementation is open-sourced to support reproducibility. This is the first FTL method for continuum robots with a fully-actuated $SE(3)$ base and formal planning guarantees.

\section{Problem Formulation}
\label{sec:problemFormulation}

Following the standard FTL problem formulation, the desired path is given as an ordered set of 3D waypoints
$\mathcal{W}=\{\mathbf{w}_1, \mathbf{w}_2, \ldots, \mathbf{w}_n\}$, $\mathbf{w}_i\in\mathbb{R}^3$.
Unlike methods that assume specific path representations (e.g., constant-curvature arcs), our formulation accepts arbitrary waypoint sequences, whether user-defined, discretized from continuous curves, or generated by higher-level planners.
The shape of a CR in configuration $\mathbf{q}$ is described by backbone points $\mathbf{p}(s)$ for arc length $s \in [0, S]$, computed via a forward model (FM), where $\mathbf{p}(0)$ is the base and $\mathbf{p}(S)$ is the tip.
In practice, we discretize this shape as $\mathbf{P} = [\mathbf{p}_0, \ldots, \mathbf{p}_D]^\top \in \mathbb{R}^{(D+1) \times 3}$, with $\mathbf{p}_0 = \mathbf{p}(0)$ and $\mathbf{p}_D = \mathbf{p}(S)$.

We consider a CR mounted to a 6-DOF base, representing practical systems where CRs are attached to serial manipulators.
The CR configuration space is denoted $\mathcal{Q} \subset \mathbb{R}^d$, and the combined system configuration space is
$\mathcal{K} = SE(3) \times \mathcal{Q}$.
Let $f: \mathcal{Q} \rightarrow \mathbb{R}^{(D+1) \times 3}$ denote the FM function mapping a CR configuration to its discretized backbone points in the CR base frame.
Given a base pose $\mathbf{T}_b \in SE(3)$, the corresponding world-frame backbone points are given by
$\mathbf{T}_b \cdot f(\mathbf{q})$.

An FTL motion plan is defined as a sequence of $N$ configurations
$\{(\mathbf{T}_{b,1}, \mathbf{q}_1), \ldots, (\mathbf{T}_{b,N}, \mathbf{q}_N)\} \subset \mathcal{K}$
that guides the robot through all waypoints in $\mathcal{W}$.
While there are $n$ waypoints, the full motion plan consists of $N \gg n$ configurations to ensure continuous interpolation between waypoints.
At each configuration index $k$, the planner must jointly choose the CR shape parameters $\mathbf{q}_k$ and the base pose $\mathbf{T}_{b,k}$.

As the robot progresses along the path, increasing portions of the CR backbone are considered “inserted”.
We associate each waypoint $\mathbf{w}_i$ with an insertion depth $s \in [0, S]$, representing the arc length along the backbone that should coincide with the path when the tip reaches $\mathbf{w}_i$.

We define two properties that should hold throughout the trajectory:
\begin{itemize}
    \item \textbf{Tip Tracking:} The tip should be placed at waypoint $\mathbf{w}_i$ when passing through it, and follow a smooth interpolation (e.g., linear) between consecutive waypoints. 
    \item \textbf{Shape Following:} The inserted portion of the robot should minimize deviation from the corresponding segment of the path.
\end{itemize}

A key distinction from classical FTL formulations is that the base pose $\mathbf{T}_b$ is not fixed.
Instead, the base pose is free to vary throughout the motion and is derived jointly with the CR configuration.
This additional freedom is essential for modeling CRs mounted on serial manipulators, but it fundamentally alters the FTL problem structure by removing the fixed reference frame typically assumed in prior work.

As a concrete instantiation, this work describes the method for a 3-segment TDCR with configuration space parameterized by Clarke coordinates~\cite{grassmann_2025_clarke_coordinates}, giving $\mathbf{q} \in \mathbb{R}^6$; the method is generalizable to any CR with any forward model.

\section{Method}

As discussed in Section~\ref{sec:intro}, the high-dimensional configuration space $\mathcal{K} = SE(3) \times \mathcal{Q}$ and nonlinear CR FM make iterative optimization approaches prone to local minima and computationally expensive. 
Our method addresses this by decoupling the problem into global shape search and local interpolation, yielding three key benefits:
\begin{itemize}
    \item \textbf{Model-agnostic:} FM evaluations occur primarily offline, enabling use of complex kinetostatic models that would be impractical for online optimization.
    \item \textbf{Efficient:} Online planning involves shape evaluation and closed-form geometric construction, with only small number of FM evaluations for interpolation.
    \item \textbf{Theoretically Guaranteed:} The sampling-based formulation provides formal guarantees including resolution completeness (Section~\ref{sec:theory}).
\end{itemize}
The method proceeds as follows: offline library generation (Section~\ref{subsec:shape_library}), online base pose alignment and shape search at each waypoint (Sections~\ref{subsec:base_align}--\ref{subsec:shape_eval}), and interpolation between waypoints (Section~\ref{subsec:interpolation}).

\begin{figure*}[!ht]
    \centering

    \includegraphics[height=5.5cm]{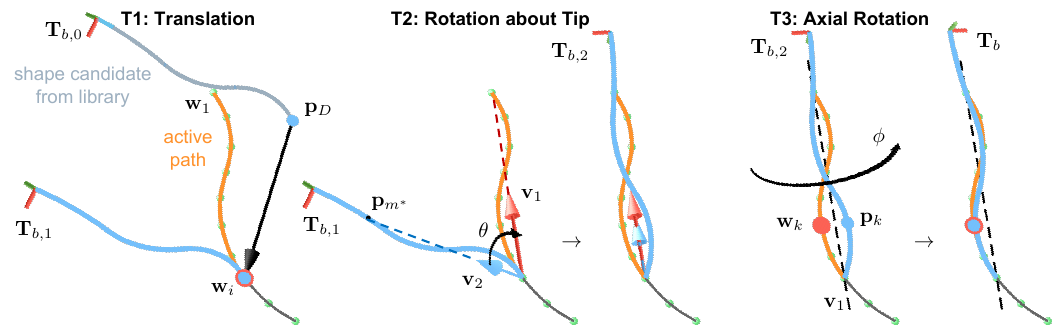}
    
    \caption{\textbf{Geometric base pose alignment} via three sequential transformations. The robot shape (blue) is aligned to the active path (orange) while guaranteeing exact tip placement at waypoint $\mathbf{w}_i$. \textbf{T1:} Translation by $(\mathbf{w}_i - \mathbf{p}_D)$ places the tip exactly at the waypoint. \textbf{T2:} Rotation about $\mathbf{w}_i$ by angle $\theta$ aligns the tip-to-robot direction $\mathbf{v}_2$ with the path direction $\mathbf{v}_1$. \textbf{T3:} Axial rotation about the $\mathbf{v}_1$ axis by angle $\phi$ resolves the remaining degree of freedom using a third correspondence pair $(\mathbf{w}_k, \mathbf{p}_k)$. Together, these transformations uniquely determine the 6-DOF base pose $\mathbf{T}_b$ with exact tip tracking by construction.
    }
    \label{fig:transformations}
\end{figure*}

\subsection{Shape Library}
\label{subsec:shape_library}

\subsubsection{Library Generation}
The shape library is constructed offline by sampling $N_{lib}$ configurations uniformly from the CR configuration space $\mathcal{Q}$. For each sampled configuration $\mathbf{q}_i$, we compute FM to obtain a discretized shape representation. Each shape is stored as a tuple $S^{(i)} = (\mathbf{q}^{(i)}, \mathbf{P}^{(i)})$, where $\mathbf{P}^{(i)} = [\mathbf{p}^{(i)}_{0}, \ldots, \mathbf{p}^{(i)}_{D}]^\top \in \mathbb{R}^{(D+1) \times 3}$ contains $D+1$ points along the backbone. $D$ should be larger than $n$ to meaningfully capture the CR shape. $D$ remains as a fixed constant for all shapes. The complete library is $\mathcal{L} = \{S^{(i)}\}_{i=1}^{N_{lib}}$. The shape library approximates the CR's reachable shape space. Since library lookup is much faster than forward model evaluation, this shifts the computational burden offline and enables efficient online planning.

\subsubsection{Threshold Clustering}
\label{subsubsec:clustering}
At each waypoint, we query $\mathcal{L}$ to find the shape minimizing deviation from the active path. The base case is linear search: evaluating all $N_{lib}$ shapes guarantees finding the optimal shape within the library, but incurs $\mathcal{O}(N_{lib})$ evaluations per waypoint.

To reduce computation, we introduce clustering as an optional mechanism that trades accuracy for speed. Shapes are grouped based on pairwise similarity:
\begin{equation}
    M(S^{(a)}, S^{(b)}) = \sum_{j=0}^{D} \|\mathbf{p}^{(a)}_j - \mathbf{p}^{(b)}_j\|
\end{equation}
Clusters are formed using a similarity threshold $\gamma$: iterating through ungrouped shapes, each becomes a cluster center and all shapes within distance $\gamma$ are added to its clusters, forming local balls of bounded radius in shape space.

The optimal $\gamma$ depends on the specific CR and its shape distribution. In practice, we empirically choose $\gamma$ by analyzing k-nearest neighbor distances in the library, targeting approximately $\lfloor\sqrt{N_{lib}} \cdot 1.5\rfloor$ clusters as a suggested default for the speed-accuracy tradeoff; see Appendix~\ref{app:clustering_ablation} for an ablation study on the effect of cluster size. Setting $\gamma = 0$ recovers the linear search behavior.

During online execution with clustering enabled, we first evaluate only cluster centers, then search exhaustively within the best-scoring cluster. This two-pass strategy reduces complexity from $\mathcal{O}(N_{lib})$ to $\mathcal{O}(\sqrt{N_{lib}})$, at the cost of potentially missing the globally optimal shape if it resides in a cluster whose center scores poorly.

\subsection{Base Pose Alignment}
\label{subsec:base_align}

At each waypoint $\textbf{w}_i$, we query the shape library to find the best-matching shape for the active path $\mathcal{W}_i = \{\textbf{w}_1, \ldots, \textbf{w}_i\}$. For each candidate shape $S = (\mathbf{q}, \mathbf{P})$, evaluation involves two steps: (1) align the shape to the path via a closed-form geometric construction that determines the base pose $\mathbf{T}_b$, and (2) compute the shape deviation after alignment. The shape with the minimum deviation is selected. Since alignment and evaluation are independent across shapes, the process is easily parallelizable. To align the shape to the path, the following steps are performed:

\subsubsection{Active Shape Subset}
Since the robot is only partially inserted at waypoint $\textbf{w}_i$, we first identify the active portion of the shape. Let $|\mathcal{W}_i| = \sum_{j=1}^{i-1} \|\textbf{w}_j - \textbf{w}_{j+1}\|$ denote the arc length of the active path. We select the subset of shape points from shape tip $\mathbf{p}_D$ whose arc length best matches $|\mathcal{W}_i|$:
\begin{equation}
    m^* = \arg\min_m \left| |\mathcal{W}_i| - \sum_{j=m}^{D-1} \|\mathbf{p}_j - \mathbf{p}_{j+1}\| \right|
\end{equation}
The active shape subset is $\mathbf{P}_{act} = \{\mathbf{p}_{m^*}, \ldots, \mathbf{p}_D\}$. Intuitively, we select the portion of the shape whose arc length best matches the inserted path length, ensuring meaningful shape-to-path comparison.

\subsubsection{Base Pose Alignment}
We compute the base pose $\mathbf{T}_b \in SE(3)$ through three sequential transformations that align the shape to the path with exact tip placement (Fig.~\ref{fig:transformations}).

\textbf{Transformation 1 (Translation):} Translate the shape by $(\textbf{w}_i - \mathbf{p}_D)$ so the tip coincides exactly with waypoint $\textbf{w}_i$. This ensures the tip-tracking property by construction.

\textbf{Transformation 2 (Rotation about tip):} Rotate the shape about $\textbf{w}_i$ to align the base-to-tip directions. Define unit vectors $\mathbf{v}_1 = (\textbf{w}_i - \textbf{w}_1)/\|\textbf{w}_i - \textbf{w}_1\|$ along the path and $\mathbf{v}_2 = (\textbf{w}_i - \mathbf{p}_{m^*})/\|\textbf{w}_i - \mathbf{p}_{m^*}\|$ along the translated shape. The rotation axis is $\mathbf{v} = \mathbf{v}_2 \times \mathbf{v}_1$ and angle is $\theta = \cos^{-1}(\mathbf{v}_1 \cdot \mathbf{v}_2)$. Since this rotation is about $\textbf{w}_i$, tip placement is preserved.

\textbf{Transformation 3 (Axial rotation):} A rotational degree of freedom remains about the path axis $\mathbf{v}_1$. To resolve this, we select a third correspondence point $\textbf{w}_k \in \mathcal{W}_i$ that maximizes distance to the line through $\textbf{w}_1$ and $\textbf{w}_i$, with corresponding shape point $\mathbf{p}_k$ matched by arc length ratio. The rotation angle $\phi$ is computed from the projections of $(\textbf{w}_k - \textbf{w}_1)$ and $(\mathbf{p}_k - \textbf{w}_1)$ onto the plane orthogonal to $\mathbf{v}_1$. Since, this rotation is also about $\mathbf{v}_1$ passing through $\textbf{w}_i$, tip placement is preserved.

Together, Transformations 1--3 uniquely determine all six degrees of freedom of $\mathbf{T}_b$ while guaranteeing $\mathbf{p}_D = \textbf{w}_i$ by construction. Note that this construction requires at least three non-collinear active waypoints. For the first two waypoints ($\textbf{w}_1$ and $\textbf{w}_2$), the base pose can be trivially determined using a home configuration (e.g., straight) or by reusing the shape found at $\textbf{w}_3$, depending on application requirements.

\subsection{Shape Evaluation}
\label{subsec:shape_eval}
After alignment, we compute the deviation between the transformed shape and the path using the symmetric Chamfer distance:
\begin{equation}\label{eqn:shape_metric}
\frac{1}{|\mathcal{W}_i|} \sum_{\mathbf{x} \in \mathcal{W}_i} \min_{\mathbf{y} \in \mathbf{P}_{act}} \|\mathbf{x} - \mathbf{y}\| + \frac{1}{|\mathbf{P}_{act}|} \sum_{\mathbf{y} \in \mathbf{P}_{act}} \min_{\mathbf{x} \in \mathcal{W}_i} \|\mathbf{y} - \mathbf{x}\|
\end{equation}
where $\mathbf{P}_{act}$ denotes active shape after alignment. We choose symmetric Chamfer distance as it is robust to differences in point density between the path and shape discretizations, and penalizes deviations bidirectionally. The shape minimizing the deviation cost is selected, and its associated $\mathbf{T}_b$ forms the complete configuration $(\mathbf{q}, \mathbf{T}_b) \in \mathcal{K}$.

This formulation can be viewed as nearest-neighbor search in a task-induced metric space: rather than measuring distance in configuration space as in classical sampling-based planners, we evaluate candidates by task performance (shape deviation) after analytically aligning the base pose.
Our formulation readily accommodates flexible cost design. For instance, different weights can be assigned to waypoints along the path, e.g., prioritizing recent waypoints as the robot progresses to emphasize accuracy near the tip. Additional terms such as base movement penalties, orientation change from the previous step, or collision constraints may also be incorporated to encourage smooth motion or address application-specific requirements.

\subsection{Interpolation}
\label{subsec:interpolation}

At this stage, we have a sparse motion plan with one configuration $(\mathbf{q}_i, \mathbf{T}_b^i)$ at each waypoint $\mathbf{w}_i$. To execute on real hardware, this sparse plan must be converted into a continuous trajectory in both CR joint space and base pose space. However, naive interpolation between consecutive configurations can result in large deviations from the FTL path, as the intermediate shapes may not align well with the path.

The main challenge is that consecutive configurations may differ significantly in both joint state and base pose (Fig.~\ref{fig:interpolationExm}), requiring large coordinated motions to maintain FTL path tracking. We address this in two steps: first, we exploit the radial symmetry of CRs to pre-align consecutive configurations, reducing orientation discontinuities; then, we interpolate smoothly in joint space while constructing base poses that guarantee exact tip placement throughout.

\subsubsection{Exploiting Radial Symmetry}
CRs exhibit radial symmetry about their backbone axis: rotating the robot about this axis produces a shape in a different bending plane without changing the shape itself. We exploit this to pre-align consecutive configurations before interpolation.

Given the sparse plan $\{(\mathbf{q}_j, \mathbf{T}_b^j)\}_{j=1}^{n}$, we rotate each configuration to maximize alignment with a common reference direction (e.g., the base $x$-axis of the first configuration, $x_1$). The optimal rotation angle $\theta_j$ is computed in closed form by maximizing the dot product between the rotated $x$-axis and the reference direction:
\begin{equation}
    \theta_j = \mathrm{atan2}\left(-\mathbf{x}_1^\top \mathbf{y}_j,\; \mathbf{x}_1^\top \mathbf{x}_j\right)
\end{equation}
where $\mathbf{x}_j$ and $\mathbf{y}_j$ are the $x$ and $y$ axes of the base orientation $\mathbf{T}_b^j$. Depending on the CR design, three types of radial symmetry determine how $\theta_j$ is applied:

\noindent\textbf{1. Continuous:} For CRs with fully symmetric actuation (e.g., PCC), $\theta_j$ is applied directly.

\noindent\textbf{2. Discrete:} For TDCRs with $k$ tendons per segment, $\theta_j$ is snapped to the nearest $360^\circ/k$ increment.

\noindent\textbf{3.~Data-driven:} When analytical symmetry properties are unknown, we identify radially symmetric shapes empirically. Using the same threshold-based clustering method from Section~\ref{subsec:shape_library}, we cluster shapes that have been rotated to align their tips to a common plane. Shapes within the same cluster are considered radially symmetric variants, allowing substitution with a similar shape with a more favorable base orientation, denoted $(\mathbf{q}_j', \mathbf{T}_b^{j\prime})$.

\begin{figure}[t]
    \centering
    \includegraphics[width=8.5cm]{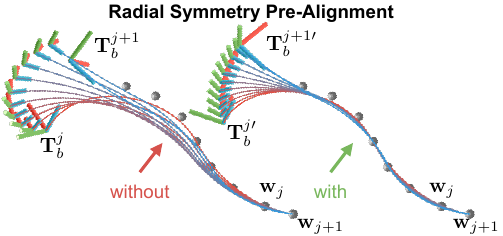}
    \caption{\textbf{Pre-alignment via radial symmetry.} Fanned frames show interpolated configurations between waypoints $\mathbf{w}_j$ (red shape) and $\mathbf{w}_{j+1}$ (blue shape). Left: without alignment, differing bending planes cause large base rotations during interpolation. Right: pre-aligning reduces orientation discontinuity while preserving shape, resulting in better shape following.}
    \label{fig:interpolationExm}
\end{figure}

\subsubsection{Local Interpolation Between Waypoints}
Between waypoints $\textbf{w}_j$ and $\textbf{w}_{j+1}$, we generate $h$ intermediate configurations to ensure smooth motion for practical deployment. The key idea is to define a smooth tip trajectory first, then compute the base pose that places the tip exactly on this trajectory.

We linearly interpolate the tip position and use SLERP~\cite{shoemake_1985_animating_rotation} for tip orientation, both obtained from the sparse plan, yielding a smooth desired tip pose ${}^k\!\mathbf{T}_{tip}$ at each interpolation step $k \in \{0, \ldots, h\}$ with $\alpha = k/h$:
\begin{align}
    {}^k\!\textbf{w} &= (1-\alpha) \textbf{w}_j + \alpha \textbf{w}_{j+1}\\
    {}^k\!\mathbf{R}_{tip} &= \text{SLERP}(\mathbf{R}_{tip}^{j\prime}, \mathbf{R}_{tip}^{(j+1)\prime}, \alpha)
\end{align}
Simultaneously, we linearly interpolate the CR joint state (after pre-alignment via radial symmetry):
\begin{equation}
    {}^k\!\mathbf{q}' = (1-\alpha) \mathbf{q}'_j + \alpha \mathbf{q}'_{j+1}
\end{equation}
Evaluating $f({}^k\!\mathbf{q}')$ gives the tip pose $\mathbf{T}_{tip}({}^k\!\mathbf{q}')$ in the CR's local frame. To place the tip exactly at the desired pose ${}^k\!\mathbf{T}_{tip}$, we construct the base pose as:
\begin{equation}
    {}^k\!\mathbf{T}_b = {}^k\!\mathbf{T}_{tip} \cdot \mathbf{T}_{tip}({}^k\!\mathbf{q}')^{-1}
\end{equation}
This construction guarantees exact tip tracking at every interpolation step. Each step requires one FM evaluation, and with $h$ steps between each of $n-1$ waypoint pairs, the complete motion plan consists of $N = (n-1) \cdot h + 1$ configurations with $(n-1) \cdot h$ total FM evaluations—a fixed, predictable cost independent of library size.

\section{Theoretical Guarantees}\label{sec:theory}

We analyze the theoretical properties of our motion planner, establishing resolution completeness and characterizing tip-tracking optimality. We present the analysis for the base case of linear search over the shape library. Section~\ref{sec:clustering_tradeoff} discusses the speed-accuracy tradeoff when using clustered search.

\subsection{Assumptions}

We require the following mild conditions:

\begin{itemize}
    \item[\textbf{A1}] The CR configuration space $\mathcal{Q}$ is compact.
    \item[\textbf{A2}] $C^1$ Forward Model: $f: \mathcal{Q} \rightarrow \mathcal{S}$ is continuously differentiable on $\mathcal{Q}$, where $\mathcal{S}$ is the space of robot shapes.
    \item[\textbf{A3}] Smooth Feasibility: For a feasible path $\mathcal{W}$, there exists a continuous function $\mathbf{q}^*: [0,1] \rightarrow \mathcal{Q}$ such that the corresponding shapes achieve bounded shape deviation at each insertion depth.
\end{itemize}

A1 holds for all practical robots with bounded joint limits. A2 is satisfied by standard CR models: PCC, Cosserat rod, FEM, and neural networks with smooth activations~\cite{rao_2020_how_to_model}. A2 further implies a deterministic forward model. A3 formalizes the existence of a smooth FTL motion and is standard in sampling-based planning.

\subsection{Resolution Completeness}
A motion planning algorithm is \textit{resolution complete} if, for a given discretization level, it is guaranteed to find a valid plan when one exists and terminate in finite time~\cite{barraquand_1991_robot_motion_planning}. 
In our framework, the discretization level corresponds to the discrete resolution library size $N_{lib}$, not the continuous configuration space. The key insight is that if a feasible configuration $\mathbf{q}^*$ exists, a sufficiently large library will contain a nearby configuration; Lipschitz continuity of shape deviation then ensures this neighbor achieves comparable performance, yielding a valid plan.
We formalize this below.

\begin{lemma}[Coverage Probability]
\label{lem:coverage}
Let $\mathcal{L}_N = \{\mathbf{q}_1, \ldots, \mathbf{q}_N\}$ be i.i.d.\ uniform samples from compact $\mathcal{Q}$. For any $\epsilon_C > 0$ and $\delta > 0$, there exists $N_0$ such that for $N \geq N_0$:
\begin{equation}
    \mathbb{P}[\mathcal{L}_N \text{ is an } \epsilon_C\text{-cover of } \mathcal{Q}] \geq 1 - \delta
\end{equation}
Intuitively, uniform sampling eventually places points densely throughout $\mathcal{Q}$, ensuring every configuration has a nearby sample. See Appendix~\ref{app:coverage_proof} for the full proof.

\end{lemma}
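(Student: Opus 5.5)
The plan is the standard covering-number argument: reduce the infinite family of balls to a finite net using compactness, then apply a union bound over that net.

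\emph{Finite net.} By A1, $\mathcal{Q}\subset\mathbb{R}^d$ is compact, so it is totally bounded. Hence there is a finite collection of centers $\mathbf{c}_1,\ldots,\mathbf{c}_M\in\mathcal{Q}$ such that the balls $B(\mathbf{c}_j,\epsilon_C/2)$ cover $\mathcal{Q}$.

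\emph{Reduction to hitting each ball.} Suppose every ball $B_j = B(\mathbf{c}_j,\epsilon_C/2)\cap\mathcal{Q}$ contains at least one sample $\mathbf{q}_{i_j}$. Take any $\mathbf{q}\in\mathcal{Q}$ and choose $j$ with $\mathbf{q}\in B_j$. By the triangle inequality,
\begin{equation*}
\|\mathbf{q}-\mathbf{q}_{i_j}\|\le\|\mathbf{q}-\mathbf{c}_j\|+\|\mathbf{c}_j-\mathbf{q}_{i_j}\|<\epsilon_C .
\end{equation*}
Therefore $\mathcal{L}_N$ is an $\epsilon_C$-cover whenever every $B_j$ is hit, and it suffices to bound the probability that some $B_j$ is missed.

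\emph{Positive mass of each ball.} This is the only step needing care, since $\mathcal{Q}$ might have thin parts. ``Uniform'' means normalized Lebesgue measure, so $\mathrm{vol}(\mathcal{Q})>0$. I would assume $\mathcal{Q}$ is the closure of its interior (e.g.\ a box of joint limits), which gives $\mathrm{vol}(B_j)>0$ for each $j$. If that fails, one can instead drop the balls of measure zero. The cost is that the conclusion then covers only the support of the sampling measure, which coincides with $\mathcal{Q}$ up to a null set. With this in place,
\begin{equation*}
p_{\min}=\min_j \frac{\mathrm{vol}(B_j)}{\mathrm{vol}(\mathcal{Q})}>0 .
\end{equation*}

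\emph{Union bound.} Because the samples are i.i.d., the probability that a given $B_j$ is missed by all $N$ samples is
\begin{equation*}
(1-\mathrm{vol}(B_j)/\mathrm{vol}(\mathcal{Q}))^N\le(1-p_{\min})^N\le e^{-Np_{\min}} .
\end{equation*}
Summing over the $M$ balls,
\begin{equation*}
\mathbb{P}[\text{not an }\epsilon_C\text{-cover}]\le M e^{-Np_{\min}} .
\end{equation*}
Setting
\begin{equation*}
N_0=\lceil \ln(M/\delta)/p_{\min}\rceil
\end{equation*}
makes this bound at most $\delta$ for all $N\ge N_0$, which proves the lemma.

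For a box-shaped $\mathcal{Q}$ the constants can be made explicit: $M=O(\epsilon_C^{-d})$ and $p_{\min}=\Omega(\epsilon_C^{d})$. This gives $N_0=O(\epsilon_C^{-d}\log(\epsilon_C^{-d}/\delta))$.
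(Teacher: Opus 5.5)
Your proposal is correct and takes the same route as the paper's appendix proof: a finite net from compactness, a positive lower bound on the probability of hitting each ball, and a union bound using $(1-p_{\min})^N \le e^{-Np_{\min}}$. Your version is slightly more careful than the paper's in two respects: radius $\epsilon_C/2$ avoids the paper's factor-of-two slip (its radius-$\epsilon_C$ balls only certify a $2\epsilon_C$-cover), and you state the regularity of $\mathcal{Q}$ that the paper's bound $\mathrm{Vol}(B_j)\ge c_d\epsilon_C^d$ assumes without saying so.
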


\begin{lemma}[Lipschitz Continuity of Shape Deviation]
\label{lem:lipschitz}
Under A1--A2, the shape deviation $E(S, \mathcal{W}_i)$ after optimal base pose alignment is Lipschitz continuous in the configuration:
\begin{equation}
    |E(\mathbf{q}_1, \mathcal{W}_i) - E(\mathbf{q}_2, \mathcal{W}_i)| \leq L_E \|\mathbf{q}_1 - \mathbf{q}_2\|
\end{equation}
for some constant $L_E < \infty$.
\end{lemma}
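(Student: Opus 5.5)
The proof is a composition of Lipschitz maps. We write $E(\mathbf{q},\mathcal{W}_i) = C(\mathbf{T}_b(\mathbf{q})\cdot \mathbf{P}_{act}(\mathbf{q}), \mathcal{W}_i)$, where $C$ is the symmetric Chamfer distance of Eq.~\eqref{eqn:shape_metric}. The plan is to bound, in turn, the Lipschitz constants of the three maps
\[
\mathbf{q}\mapsto f(\mathbf{q}), \qquad f(\mathbf{q})\mapsto \mathbf{T}_b\cdot f(\mathbf{q}), \qquad \text{aligned shape}\mapsto C,
\]
and then multiply them.

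\textbf{Step 1 (forward model).} By A2, $f$ is $C^1$. By A1, $\mathcal{Q}$ is compact, so $\|Df\|$ is bounded by some $L_f$. If $\mathcal{Q}$ is convex (e.g.\ a box of joint limits), the mean value theorem gives $\|f(\mathbf{q}_1)-f(\mathbf{q}_2)\|\le L_f\|\mathbf{q}_1-\mathbf{q}_2\|$ pointwise for every backbone point $\mathbf{p}_j$. Otherwise we use the standard argument that a $C^1$ map on a compact set is locally Lipschitz and therefore globally Lipschitz, with a constant that absorbs the ratio of intrinsic to Euclidean distance.

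\textbf{Step 2 (alignment).} Each of T1--T3 is given in closed form by elementary functions of $\mathbf{p}_D$, $\mathbf{p}_{m^*}$, $\mathbf{p}_k$ and the fixed waypoints: a translation, the axis--angle rotation built from normalized vectors $\mathbf{v}_1,\mathbf{v}_2$, and an atan2 of planar projections. These are smooth provided the denominators stay away from zero. We therefore restrict to shapes with $\|\mathbf{w}_i-\mathbf{p}_{m^*}\|\ge \eta>0$, with $\mathbf{v}_2\neq-\mathbf{v}_1$, and with the projection of $\mathbf{p}_k-\mathbf{w}_1$ orthogonal to $\mathbf{v}_1$ of norm at least $\eta$. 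On this compact nondegenerate set, the composite map from shape to aligned shape has bounded derivative, with a constant $L_T$.

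The indices $m^*$ and $k$ are chosen by an argmin over arc length, which is discontinuous. To handle this, we replace the discrete selection by the continuous arc-length parametrization $\mathbf{p}(s)$, choosing $s^*$ with arc length exactly $|\mathcal{W}_i|$. Under A2, $s^*$ depends continuously (indeed Lipschitz, by the implicit function theorem, since $\|\mathbf{p}'(s)\|$ is bounded below) on $\mathbf{q}$. Treated this way, the discrete version differs only by a discretization error of $O(1/D)$.

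\textbf{Step 3 (Chamfer).} For finite point sets, $\min_{\mathbf{y}}\|\mathbf{x}-\mathbf{y}\|$ is 1-Lipschitz in each $\mathbf{y}$, because a minimum of 1-Lipschitz functions is 1-Lipschitz. Averaging preserves this, so $C$ is Lipschitz with constant at most $2$ with respect to the max-norm perturbation of the aligned points. This holds when the active point set has fixed cardinality; varying cardinality is again handled via the continuous parametrization of Step 2.

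Combining the three steps gives $L_E\le 2L_TL_f$. \textbf{The main obstacle} is Step 2: the argmin index selection and the singularities of the rotation construction (antiparallel $\mathbf{v}_1,\mathbf{v}_2$, vanishing projections) mean that $E$ is only Lipschitz away from degenerate configurations and up to discretization error. I would state the lemma on the nondegenerate compact subset, or interpret ``optimal base pose alignment'' as the infimum over $\mathbf{T}_b\in SE(3)$ with the tip constraint. In that case, Lipschitz continuity follows more cleanly: an infimum over a family of functions that are Lipschitz in $\mathbf{q}$ with a uniform constant is itself Lipschitz, since $SE(3)$ acts isometrically and the tip-translation constraint depends $L_f$-Lipschitzly on $\mathbf{q}$. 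This infimum interpretation is the route I would try first.
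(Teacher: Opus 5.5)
Your Steps 1--3 are essentially the paper's own proof. The appendix also obtains $L_f$ from a bounded derivative of $f$ on compact $\mathcal{Q}$, checks T1--T3 piece by piece away from degeneracies, uses that distance to a fixed set is 1-Lipschitz for the Chamfer term, and composes to $L_E=L_fL_T$. In effect the result holds only under its listed non-degeneracy conditions.

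Where you deviate, you are more careful than the paper, and the differences matter:
\begin{itemize}
\item The paper never addresses that $m^*$ is defined by an argmin. Where the argmin switches, $\mathbf{P}_{act}$, its cardinality and $\mathbf{v}_2$ all jump, so $E$ generically jumps. The stated inequality can therefore hold only where the selected indices are locally constant, or up to your additive $O(1/D)$. That weaker form still supports Theorem~\ref{thm:completeness}, with an extra $O(1/D)$ in the achieved deviation.
\item The paper excludes $\mathbf{v}_2=\mathbf{v}_1$, although only $\mathbf{v}_2=-\mathbf{v}_1$ is singular.
\item In T3 the paper checks only that the path-side projection is nonzero. Your lower bound on the projection of $\mathbf{p}_k$ is what atan2 actually needs.
\item The paper replaces your chord bound $\|\mathbf{w}_i-\mathbf{p}_{m^*}\|\ge\eta$ with the false claim that this chord equals the active arc length.
\end{itemize}

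Two small repairs to your version. First, make the antiparallel exclusion quantitative, e.g.\ $\mathbf{v}_1\cdot\mathbf{v}_2\ge -1+\eta$; otherwise your nondegenerate set is not compact. Second, in the paper's formulation $s$ is already arc length, so the continuous cut point is simply $s^*=S-|\mathcal{W}_i|$, independent of $\mathbf{q}$. You therefore do not need the implicit function theorem, which would require $C^1$ dependence of $\partial_s\mathbf{p}$ on $\mathbf{q}$, more than A2 provides.

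The infimum route you say you would try first is genuinely different and cleaner. Tip-constrained aligned points have the form $\mathbf{w}_i+\mathbf{R}(\mathbf{p}_j(\mathbf{q})-\mathbf{p}_D(\mathbf{q}))$. They move by at most $2L_f\|\mathbf{q}_1-\mathbf{q}_2\|$ uniformly in $\mathbf{R}\in SO(3)$. Hence the infimum is Lipschitz with $L_E\le 4L_f$, with no T2/T3 non-degeneracy assumptions, though you still need a $\mathbf{q}$-independent active index set.

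However, it proves Lipschitz continuity of a different function. The paper's $E$ is the deviation after the closed-form T1--T3 pose, which is not deviation-optimal. Theorem~\ref{thm:completeness} needs Lipschitz continuity of exactly the quantity the planner uses to rank library shapes. The T1--T3 deviation is only bounded below by the optimal one, so an upper bound on the optimal deviation at $\hat{\mathbf{q}}_i$ says nothing about its T1--T3 deviation.

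So read ``optimal'' in the statement as ``T1--T3'' and keep your Steps 1--3, on the restricted set, as the proof. The infimum version would be the right lemma only for a planner that actually solves for the optimal pose, which gives up the closed-form construction.
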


\begin{proof}
Under A1--A2, the forward model $f$ is continuously differentiable on compact $\mathcal{Q}$, so $\nabla f$ is bounded and $f$ is Lipschitz by the mean value theorem. The base pose construction (Transformations 1--3) depends continuously on the shape points, and shape deviation (\ref{eqn:shape_metric}) is continuous in the transformed shape. The composition of Lipschitz functions is Lipschitz. See Appendix~\ref{app:lipschitz_proof} for the full proof.
\end{proof}

\begin{theorem}[Resolution Completeness]
\label{thm:completeness}
Under A1--A3, for any $\delta > 0$, there exists library size $N_0$ such that for $N_{lib} \geq N_0$:
\begin{equation}
    \mathbb{P}[\text{valid plan found} \mid \text{feasible plan exists}] \geq 1 - \delta
\end{equation}
\end{theorem}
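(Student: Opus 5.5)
The argument combines Lemma~\ref{lem:coverage} (coverage) with Lemma~\ref{lem:lipschitz} (Lipschitz continuity), applied waypoint by waypoint. Throughout, a plan is \emph{valid} if at each waypoint $\mathbf{w}_i$ the selected shape has deviation $E(\mathbf{q}_i,\mathcal{W}_i)\le \epsilon^*_i + \eta$. Here $\epsilon^*_i$ is the deviation bound guaranteed by A3 for the feasible continuous solution $\mathbf{q}^*$ at the insertion depth of $\mathbf{w}_i$, and $\eta>0$ is a user tolerance built into the definition of validity. Tip tracking is automatic, since Transformations~1--3 place $\mathbf{p}_D=\mathbf{w}_i$ by construction for any library shape. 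So validity reduces entirely to the shape-deviation condition.

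First, fix $\delta>0$. Set $\epsilon_C = \eta / L_E$, where $L_E$ is the constant from Lemma~\ref{lem:lipschitz}. If the constant depends on the active path $\mathcal{W}_i$, take the maximum over the finitely many $i=1,\dots,n$, which is still finite. Invoke Lemma~\ref{lem:coverage} with this $\epsilon_C$ and $\delta$ to obtain $N_0$ such that, for $N_{lib}\ge N_0$, the event $\mathcal{C}=\{\mathcal{L}_N \text{ is an } \epsilon_C\text{-cover of } \mathcal{Q}\}$ has probability at least $1-\delta$.

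Second, condition on $\mathcal{C}$ and on the existence of a feasible plan. For each waypoint $i$, let $\mathbf{q}^*_i = \mathbf{q}^*(t_i)$ be the configuration from A3 at the corresponding insertion depth. Since $\mathbf{q}^*_i\in\mathcal{Q}$ and the library covers $\mathcal{Q}$, there is a library sample $\mathbf{q}_j$ with $\|\mathbf{q}_j-\mathbf{q}^*_i\|\le\epsilon_C$. Lemma~\ref{lem:lipschitz} then gives $E(\mathbf{q}_j,\mathcal{W}_i)\le E(\mathbf{q}^*_i,\mathcal{W}_i)+L_E\epsilon_C \le \epsilon^*_i+\eta$. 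Linear search evaluates every library shape and returns the minimizer, so the selected shape does at least as well as $\mathbf{q}_j$. Hence every waypoint meets its bound simultaneously on the single event $\mathcal{C}$, and no union bound over waypoints is needed. Termination in finite time is immediate: there are $nN_{lib}$ closed-form alignments and evaluations, plus $(n-1)h$ forward-model calls for interpolation, which is also needed to assemble a full plan. Therefore $\mathbb{P}[\text{valid}\mid\text{feasible}]\ge\mathbb{P}[\mathcal{C}]\ge 1-\delta$.

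The main obstacle is making Lemma~\ref{lem:lipschitz} genuinely applicable at the specific points $\mathbf{q}^*_i$. The base-pose construction divides by $\|\mathbf{w}_i-\mathbf{p}_{m^*}\|$. It also uses an axial angle $\phi$ that is ill-defined when projections vanish. In addition, the active index $m^*$ is a discrete argmin, which can jump. I would handle this by restricting to a compact subset $\mathcal{Q}'\subset\mathcal{Q}$ containing the trajectory $\mathbf{q}^*([0,1])$, compact as the continuous image of $[0,1]$. On $\mathcal{Q}'$ these quantities are bounded away from degeneracy and $m^*$ is locally constant, so the constant $L_E$ is uniform there. If the paper's Lemma~\ref{lem:lipschitz} is taken as stated, this step is simply absorbed into it. Otherwise I would instead use uniform continuity of $E$ on a neighborhood of the compact trajectory, which suffices because only the existence of some $\epsilon_C$ is needed. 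Finally, the first two waypoints are handled by the home-configuration convention and contribute no probabilistic requirement.
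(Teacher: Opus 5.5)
Your proof is correct and follows the same route as the paper. A3 supplies a good configuration $\mathbf{q}^*_i$ at each waypoint, Lemma~\ref{lem:coverage} places a library sample within $\epsilon_C$ of it with probability at least $1-\delta$, Lemma~\ref{lem:lipschitz} transfers the deviation bound to that sample, and linear search returns a minimizer that does at least as well. You are more explicit than the paper: you fix a tolerance $\eta$ in the definition of validity, choose $\epsilon_C=\eta/L_E$, and note that the single covering event serves all waypoints at once. Your closing discussion of degeneracies is unnecessary once Lemma~\ref{lem:lipschitz} is taken as given, and its claim that $m^*$ is locally constant near the trajectory would itself need justification.
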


\begin{proof}
By A3, at each waypoint $\textbf{w}_i$ there exists $\mathbf{q}^*(i) \in \mathcal{Q}$ achieving shape deviation below some threshold $\epsilon^*$. By Lemma~\ref{lem:coverage}, for $N_{lib} \geq N_0$, with probability at least $1-\delta$, there exists $\hat{\mathbf{q}}_i \in \mathcal{L}_{N_{lib}}$ satisfying $\|\mathbf{q}^*(i) - \hat{\mathbf{q}}_i\| < \epsilon_C$. By Lemma~\ref{lem:lipschitz}:
\begin{equation}
    E(\hat{\mathbf{q}}_i, \mathcal{W}_i) \leq E(\mathbf{q}^*(i), \mathcal{W}_i) + L_E \epsilon_C < \epsilon^* + L_E \epsilon_C
\end{equation}
The planner selects the configuration minimizing $E(\cdot, \mathcal{W}_i)$, achieving deviation at most $\epsilon^* + L_E\epsilon_C$. As $N_{lib} \to \infty$, $\epsilon_C \to 0$, so the achieved deviation approaches $\epsilon^*$.
\end{proof}

\subsection{Tip-Tracking Convergence}

A central property of base pose alignment (Transformations 1--3) is that exact tip placement is guaranteed by geometric construction, independently of the selected shape and without iterative optimization:

\begin{theorem}[Exact Tip Tracking By Construction]
\label{thm:tip_exact}
For any selected configuration $\mathbf{q} \in \mathcal{Q}$:
\begin{enumerate}
    \item At each waypoint $\textbf{w}_i$, the tip position satisfies $\|\mathbf{p}_D - \textbf{w}_i\| = 0$.
    \item At each interpolation step $k$ between waypoints $\textbf{w}_j$ and $\textbf{w}_{j+1}$, the tip position satisfies $\mathbf{p}_D^{(k)} = (1-\alpha)\textbf{w}_j + \alpha \textbf{w}_{j+1}$ exactly, where $\alpha = k/h$.
\end{enumerate}
\end{theorem}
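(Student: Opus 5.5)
The plan is to verify both claims by following a single point, the tip, through the rigid motions used in the construction. The key tool is that a rigid motion leaves fixed every point on its rotation axis. No property of $\mathbf{q}$ is used, so the argument holds for any selected configuration.

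\emph{Part 1.} Write the world-frame shape as the composition $\mathbf{T}_b\cdot f(\mathbf{q}) = R_3 \circ R_2 \circ T_1$ applied to the library points $\mathbf{P}$, where the three maps are Transformations 1--3.
\begin{itemize}
\item $T_1$ is the translation by $(\mathbf{w}_i-\mathbf{p}_D)$. It sends $\mathbf{p}_D$ to $\mathbf{w}_i$.
\item $R_2$ is a rotation about an axis through $\mathbf{w}_i$, namely $\mathbf{x}\mapsto \mathbf{w}_i + R_{\mathbf{v},\theta}(\mathbf{x}-\mathbf{w}_i)$. It therefore fixes $\mathbf{w}_i$.
\item $R_3$ is a rotation about the line $\mathbf{w}_i+t\mathbf{v}_1$, which passes through $\mathbf{w}_i$. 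It also fixes $\mathbf{w}_i$.
\end{itemize}
Hence the tip lands at $R_3(R_2(\mathbf{w}_i))=\mathbf{w}_i$, and $\|\mathbf{p}_D-\mathbf{w}_i\|=0$.

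I would also check that the composite is a well-defined element of $SE(3)$ for any choice of the angles $\theta,\phi$. Degenerate cases need a convention but do not affect the tip.
\begin{itemize}
\item If $\mathbf{v}_1=\mathbf{v}_2$, take the identity; if $\mathbf{v}_1=-\mathbf{v}_2$, take any perpendicular axis. Either way the rotation is still about $\mathbf{w}_i$.
\item If the projection used to compute $\phi$ vanishes, set $\phi=0$.
\item For $\mathbf{w}_1,\mathbf{w}_2$, the home or reused shape must still be placed by $T_1$ first, so the same argument applies.
\end{itemize}

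\emph{Part 2.} At step $k$, set ${}^k\mathbf{T}_b={}^k\mathbf{T}_{tip}\,\mathbf{T}_{tip}({}^k\mathbf{q}')^{-1}$. The world tip pose is then ${}^k\mathbf{T}_b\,\mathbf{T}_{tip}({}^k\mathbf{q}')={}^k\mathbf{T}_{tip}$ by associativity in $SE(3)$. Its translation component is ${}^k\mathbf{w}=(1-\alpha)\mathbf{w}_j+\alpha\mathbf{w}_{j+1}$.

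Two points need to be made explicit here.
\begin{itemize}
\item The tip position of the shape is the translational part of $\mathbf{T}_{tip}$ computed from $f$. This means the forward model must output a full tip frame, or a frame must be constructed from the backbone, with origin at $\mathbf{p}_D$.
\item The radial-symmetry pre-alignment changes $(\mathbf{q}_j,\mathbf{T}_b^j)$ only by a rotation about the backbone axis. It therefore leaves the tip position unchanged, so that the endpoints $k=0$ and $k=h$ agree with Part 1.
\end{itemize}

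The main obstacle is the data-driven symmetry case. There, substituting a clustered ``similar'' shape does not exactly preserve the tip. I would handle it by noting that in Part 2 the endpoint tip positions are imposed through ${}^0\mathbf{T}_{tip}$ and ${}^h\mathbf{T}_{tip}$, which are built from $\mathbf{w}_j$ and $\mathbf{w}_{j+1}$ directly. Exactness then follows from the construction regardless of which $\mathbf{q}'$ is used.
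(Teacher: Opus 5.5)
Your proposal is correct and follows essentially the same route as the paper. Transformation 1 places the tip at $\mathbf{w}_i$, Transformations 2--3 are rotations about axes through $\mathbf{w}_i$ and so fix it, and at each interpolation step the base pose ${}^k\mathbf{T}_{tip}\,\mathbf{T}_{tip}({}^k\mathbf{q}')^{-1}$ reproduces ${}^k\mathbf{T}_{tip}$, whose position is $(1-\alpha)\mathbf{w}_j+\alpha\mathbf{w}_{j+1}$. Your extra remarks are sound refinements that the paper's short proof leaves implicit: the conventions for degenerate rotations, the need for $f$ to supply a tip frame with origin at $\mathbf{p}_D$, and the point that the endpoint positions are imposed directly from the waypoints regardless of any symmetry substitution.
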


\begin{proof}
Property (1): Transformation 1 translates the shape by $(\textbf{w}_i - \mathbf{p}_D)$, placing the tip at $\textbf{w}_i$. Transformations 2--3 rotate about $\textbf{w}_i$, preserving this constraint. Property (2): The interpolated base pose is constructed as ${}^k\!\mathbf{T}_{tip} \cdot \mathbf{T}_{tip}({}^k\!\mathbf{q})^{-1}$ where ${}^k\!\mathbf{T}_{tip}$ has position component ${}^k\!\textbf{w} = (1-\alpha)\textbf{w}_j + \alpha \textbf{w}_{j+1}$, ensuring the tip matches the interpolated waypoint.
\end{proof}

While tip error is exactly zero at interpolation steps, hardware execution requires continuous joint motion between them. During this inter-step motion, the tip may deviate from the linear FTL path. We show this deviation is bounded and vanishes with finer interpolation:

\begin{theorem}[Asymptotic Tip-Tracking Convergence]
\label{thm:asymptotic}
Between consecutive interpolation steps, the tip deviation from the linearly-interpolated FTL path satisfies:
\begin{equation}
    \max_{\beta \in [0,1]} \|\mathbf{p}_{tip}(\beta) - \mathbf{p}_{ftl}(\beta)\| \leq \frac{C}{h^2} \Delta_{max}^2
\end{equation}
where $h$ is the number of interpolation steps, $\Delta_{max}$ bounds the configuration change between waypoints, and $C$ depends on FM smoothness. Consequently, as $h \rightarrow \infty$, the supremum tip error over the entire trajectory vanishes.
\end{theorem}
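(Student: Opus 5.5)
We will treat the motion between consecutive interpolation steps $k$ and $k+1$ as a continuous execution in which both the CR joint state and the base pose are interpolated. We parametrize this segment by $\beta\in[0,1]$. The joint state is taken as ${}^{\beta}\mathbf{q} = (1-\beta)\,{}^k\mathbf{q}' + \beta\,{}^{k+1}\mathbf{q}'$, and the base pose as the geodesic (screw/SLERP) interpolation between ${}^k\mathbf{T}_b$ and ${}^{k+1}\mathbf{T}_b$. The resulting tip is $\mathbf{p}_{tip}(\beta) = {}^{\beta}\mathbf{T}_b\cdot \mathbf{p}_D({}^{\beta}\mathbf{q})$. The reference is the linear path $\mathbf{p}_{ftl}(\beta) = (1-\beta)\,{}^k\mathbf{w} + \beta\,{}^{k+1}\mathbf{w}$. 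By Theorem~\ref{thm:tip_exact}, the error $e(\beta) = \mathbf{p}_{tip}(\beta) - \mathbf{p}_{ftl}(\beta)$ satisfies $e(0)=e(1)=\mathbf{0}$. This endpoint-vanishing property is the structural fact the whole argument rests on.

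Next we invoke the standard linear-interpolation error estimate. For a $C^2$ function $e:[0,1]\to\mathbb{R}^3$ vanishing at both endpoints, applying Rolle/Taylor componentwise gives $\max_\beta\|e(\beta)\| \le \tfrac18 \max_\beta \|e''(\beta)\|$. Since $\mathbf{p}_{ftl}$ is affine in $\beta$, we have $e'' = \mathbf{p}_{tip}''$. It therefore suffices to bound the second $\beta$-derivative of the executed tip trajectory. By the chain rule, $\mathbf{p}_{tip}''$ is a sum of terms that are quadratic in the per-step increments. Examples are $\partial^2 \mathbf{p}_D/\partial\mathbf{q}^2[\Delta\mathbf{q}_k,\Delta\mathbf{q}_k]$, cross terms $\dot{\mathbf{R}}_b\,J\,\Delta\mathbf{q}_k$, and $\ddot{\mathbf{R}}_b\,\mathbf{p}_D$ with $\|\dot{\mathbf{R}}_b\|\lesssim\|\Delta\boldsymbol{\omega}_k\|$. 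Here $\Delta\mathbf{q}_k = ({}^{k+1}\mathbf{q}'-{}^k\mathbf{q}')$ equals $(\mathbf{q}'_{j+1}-\mathbf{q}'_j)/h$, so $\|\Delta\mathbf{q}_k\|\le \Delta_{max}/h$.

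The step I expect to be the main obstacle is controlling the base-pose increment by $\Delta_{max}/h$. This requires us to assume (or strengthen A2 to) a $C^2$ forward model, so that the Hessian of $\mathbf{p}_D$ and the Jacobian of the tip orientation are bounded on compact $\mathcal{Q}$ by A1. We also need $\Delta_{max}$ to bound the full change between waypoints, including the SLERP tip-orientation change and the tip displacement $\|\mathbf{w}_{j+1}-\mathbf{w}_j\|$. The base pose is ${}^k\mathbf{T}_{tip}\,\mathbf{T}_{tip}({}^k\mathbf{q}')^{-1}$, a composition of two maps that are smooth in $\alpha$: SLERP with constant angular rate, and $\mathbf{T}_{tip}\circ$ (linear in $\mathbf{q}$). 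Each map's $\alpha$-derivative is bounded by a constant times $\Delta_{max}$. Hence the change over one step $\Delta\alpha = 1/h$ is $O(\Delta_{max}/h)$ in both rotation and translation, and the rotation increments enter $\mathbf{p}_{tip}''$ multiplied by the bounded lever arm $\|\mathbf{p}_D\|\le S$.

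Collecting terms, every contribution to $\|\mathbf{p}_{tip}''\|$ is bounded by a constant times $(\Delta_{max}/h)^2$. These constants depend on $\sup\|\nabla f\|$, $\sup\|\nabla^2 f\|$ and $S$. This yields the claimed bound with $C$ equal to $\tfrac18$ times the sum of these constants. Since the bound is uniform over $k$ and over the $n-1$ waypoint pairs, the supremum tip error over the whole trajectory is at most $C\Delta_{max}^2/h^2\to 0$ as $h\to\infty$. If $\Delta_{max}$ is meant to include the translational part, a purely linearly interpolated base translation contributes no second derivative and so does not spoil the estimate.
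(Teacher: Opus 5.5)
Your proposal is correct and follows essentially the same route as the paper's proof. Both use the vanishing endpoint error from Theorem~\ref{thm:tip_exact}, the interpolation bound $\max_\beta\|\mathbf{e}(\beta)\|\le\frac{1}{8}\max_\beta\|\mathbf{e}''(\beta)\|$ (up to a constant), and a chain-rule estimate $\|\mathbf{e}''\|\le C'(\Delta_{max}/h)^2$. Your two refinements are ones the paper needs but leaves implicit or omits: you strengthen A2 to a $C^2$ forward model (the paper's Step 4 already uses the Hessian), and you require $\Delta_{max}$ to also bound the SLERP tip-orientation change, not just $\|\mathbf{q}_{j+1}-\mathbf{q}_j\|$ as in the paper. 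The latter matters because a pure base rotation with $\mathbf{q}_{j+1}=\mathbf{q}_j$ already produces a nonzero $O(h^{-2})$ deviation that the paper's $\Delta_{max}$ would not capture.
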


\begin{proof}
The tip error is zero at both endpoints of each inter-step interval (Theorem~\ref{thm:tip_exact}). Define the error $\mathbf{e}(\beta) = \mathbf{p}_{tip}(\beta) - \mathbf{p}_{ftl}(\beta)$. Since $\mathbf{e}(0) = \mathbf{e}(1) = \mathbf{0}$ and $\mathbf{e}$ is smooth (by A1--A2$'$), the maximum deviation satisfies $\max_\beta \|\mathbf{e}(\beta)\| \leq \frac{1}{8}\max_\beta \|\mathbf{e}''(\beta)\|$ (up to dimensional constants). Since $\|\mathbf{e}''(\beta)\|$ scales as $(\Delta_{max}/h)^2$, the $\mathcal{O}(1/h^2)$ bound follows. See Appendix~\ref{app:asymptotic_proof} for the full derivation.
\end{proof}

\subsection{Computational Tradeoff with Clustering}
\label{sec:clustering_tradeoff}

The theoretical guarantees established above require linear search over the full shape library $\mathcal{L}$, incurring $\mathcal{O}(N_{lib})$ shape evaluations per waypoint. The clustered variant (Section~\ref{subsubsec:clustering}) reduces this to approximately $2\sqrt{N_{lib}}$ evaluations, but sacrifices resolution completeness: if the optimal configuration resides in a cluster whose center scores poorly, that cluster will not be explored.

This presents a user-adjustable tradeoff: linear search retains full guarantees, while clustering enables faster planning at the cost of potential suboptimality. Crucially, tip-tracking guarantees (Theorems~\ref{thm:tip_exact}--\ref{thm:asymptotic}) remain valid regardless, as they depend only on base pose alignment.
In practice, we observe minimal increase in shape deviation with significant reduction in computation time (Section~\ref{sec:evaluation}), suggesting clustering is a viable choice for most applications.

\section{Evaluation}
\label{sec:evaluation}

\begin{table*}[]
\caption{Evaluation results across 120 paths with PCC. Our method outperforms the optimization-based baseline with exact tip tracking. The clustered variant maintains competitive shape deviation with significantly reduced computation time. Tip and shape deviation are reported as percentages of the total robot length (3 unit lengths).}
\label{tab:evaluation}
\begin{center}
\begin{tabular}{c|ccc|ccc|ccc}
 & \multicolumn{3}{c|}{\textbf{Tip Deviation (\%) $\downarrow$}} & \multicolumn{3}{c|}{\textbf{Shape Deviation (\%) $\downarrow$}} & \multicolumn{3}{c}{\textbf{Compute Time (s) $\downarrow$}} \\
\hline
Test Class & Optimization & Linear & Clustered & Optimization & Linear & Clustered & Optimization & Linear & Clustered \\
\hline
C Curves & 1.24 & \textbf{0.0} & \textbf{0.0} & 1.74 & \textbf{1.25} & 1.61 & 41.3 & 36.4 & \textbf{1.81} \\
S Curves & 1.21 & \textbf{0.0} & \textbf{0.0} & 2.36 & \textbf{1.94} & 2.09 & 39.3 & 46.7 & \textbf{2.35} \\
Robot Curves & 0.62 & \textbf{0.0} & \textbf{0.0} & 1.77 & \textbf{1.71} & 2.03 & 43.4 & 46.9 & \textbf{2.32}
\end{tabular}
\end{center}
\end{table*}

We evaluate our method in three settings of increasing fidelity: (1) comparison against an optimization-based baseline using a PCC kinematic model, (2) simulation with a system-identified pseudo-rigid-body model in MuJoCo, and (3) hardware demonstration on a physical tendon-driven CR mounted on a 7-DOF manipulator. Together, these experiments validate our method's accuracy, efficiency, and practical applicability.

\subsection{Metrics}
We evaluate performance using three metrics:
\begin{itemize}
    \item \textbf{Tip deviation}: distance between the CR tip and the target waypoint, normalized by robot length (\%)
    \item \textbf{Shape deviation}: symmetric Chamfer distance (Eqn.~\ref{eqn:shape_metric}) between the inserted robot shape and the active path, normalized by robot length (\%)
    \item \textbf{Compute time}: wall-clock time to generate a complete motion plan (seconds)
\end{itemize}

\subsection{PCC Model: Comparison with Optimization Baseline}
\label{subsec:pcc_eval}

\subsubsection{Setup}
We compare our method against an optimization-based FTL baseline using a piece-wise constant curvature (PCC) kinematic model. The baseline minimizes shape deviation using \texttt{SciPy}'s L-BFGS-B solver~\cite{2020SciPy-NMeth}, initialized with the solution from the previous waypoint or the zero configuration at the first waypoint. This sequential optimization approach aligns with prior FTL methods \cite{palmer_2014_realtime_method_tip, gao_2020_cr_ftl_edoscopic}. 

For our method, the shape library contains $N_{lib} = 20,000$ configurations uniformly sampled from $\mathcal{Q}$, chosen empirically to balance speed and accuracy. The CR has \num{3} segments of unit length each (total length \num{3}).(Section~\ref{subsubsec:shape_lib_size_effect} evaluates the effect of library size). We evaluate both the linear search variant and the clustered variant ($\gamma = 2.5$), with continuous radial symmetry for pre-alignment. All experiments were run on a 13th Gen Intel Core i7-13700F with 32GB memory.

\subsubsection{Test Paths}
\label{subsubsec:PCCsetupTestpaths}
We evaluate on 120 paths across three curve classes (40 per class):
\begin{itemize}
    \item \textbf{C Curves}: half circles with start at origin and endpoint sampled from $[0.5, 1.5] \times [-0.75, 0.25] \times [1, 2]$, with bending plane uniformly sampled from $[0, 2\pi]$
    \item \textbf{S Curves}: planar cubic Bézier curves with start at origin and endpoint sampled from $[-2.25, -1.25] \times [-0.5, 0.5] \times \{1.5\}$, with control points forming an S shape
    \item \textbf{Robot Curves}: 3D paths generated by FM on configurations sampled from a bounded subset of $\mathcal{Q}$, representing scenarios where the goal shape is known-feasible (e.g., retraction from a current configuration); they test planning quality rather than shape generalization.
\end{itemize}
Each path has $n=10$ waypoints with $h=10$ interpolation steps between consecutive waypoints.

\subsubsection{FTL Performance}
Table~\ref{tab:evaluation} summarizes the results. Our method achieves 0\% tip deviation across all test classes, confirming the exact tip tracking guarantee (Theorem~\ref{thm:tip_exact}). For shape deviation, the linear search variant outperforms the optimization baseline on all curve classes, confirming the hypothesis that sampling-based methods are better suited for FTL planning since they avoid local minima. Notably, the clustered variant reduces computation time by over an order of magnitude while sacrificing minimal shape accuracy.

\subsection{MuJoCo Simulation: Pseudo-Rigid-Body Model}
\label{subsec:mujoco_eval}

\begin{table}[b]
\caption{Evaluation results across 120 paths with MuJoCo simulation using a system-identified PRB model. Our method achieves exact tip tracking and reasonable shape deviation despite increased model complexity. Tip and shape deviation are reported as percentages of the total robot length.}
\label{tab:evaluationMJ}
\begin{center}
\begin{tabular}{c|c|c|c}
 & \multicolumn{1}{c|}{\textbf{Tip Dev. (\%) $\downarrow$}} & \multicolumn{1}{c|}{\textbf{Shape Dev. (\%) $\downarrow$}} & \multicolumn{1}{c}{\textbf{Time (s) $\downarrow$}} \\
\hline
Test Class & Clustered & Clustered & Clustered \\
\hline
C Curves & 0.0 & 2.46 & 2.07 \\
S Curves & 0.0 & 3.17 & 2.69 \\
Robot Curves & 0.0 & 2.43 & 2.85
\end{tabular}
\end{center}
\end{table}

\subsubsection{Setup}
We evaluate our method using a custom implementation of a pseudo-rigid-body (PRB) model in MuJoCo \cite{todorov_2012_mujoco}, where the CR is modeled by a series of 30 rigid links connected by elastic universal joints. System parameters such as joint stiffness are identified from a physical 3-segment tendon-driven CR to capture realistic deformation behavior, including segment coupling effects not modeled by PCC. 
The simulated CR is mounted on a 7-DOF manipulator (Franka Robotics) with a constant offset transformation $\mathbf{T}_{\text{offset}}$ accounting for the actuation unit and hardware mount. For each planned base pose $\mathbf{T}_b$, the manipulator end-effector pose is $\mathbf{T}_{ee} = \mathbf{T}_b \cdot \mathbf{T}_{\text{offset}}^{-1}$, which is converted to joint-level commands for the Franka robot using an inverse kinematics solver~\cite{corke_2021_rtb}. 

We test the clustered variant with shape library size of $N_{lib}=20,000$ and $\gamma=2.5$. 
We use the same three curve classes (C, S, Robot) with \num{40} paths per class, randomly initialized with the same distributions as in \ref{subsubsec:PCCsetupTestpaths}.

\subsubsection{Results}
Table~\ref{tab:evaluationMJ} summarizes the results. The method achieves 0\% tip deviation across all test classes. Shape deviation is slightly higher than the PCC experiments, reflecting the increased complexity of the PRB model with segment coupling and only three-fold discrete radial symmetry for pre-alignment. These results demonstrate that our method scales effectively to more realistic CR models: given any forward model, the planner generates motion plans with guaranteed tip tracking and competitive shape following.

\begin{figure}[htb]
    \centering
    \includegraphics[width=\columnwidth, height=0.7\columnwidth]{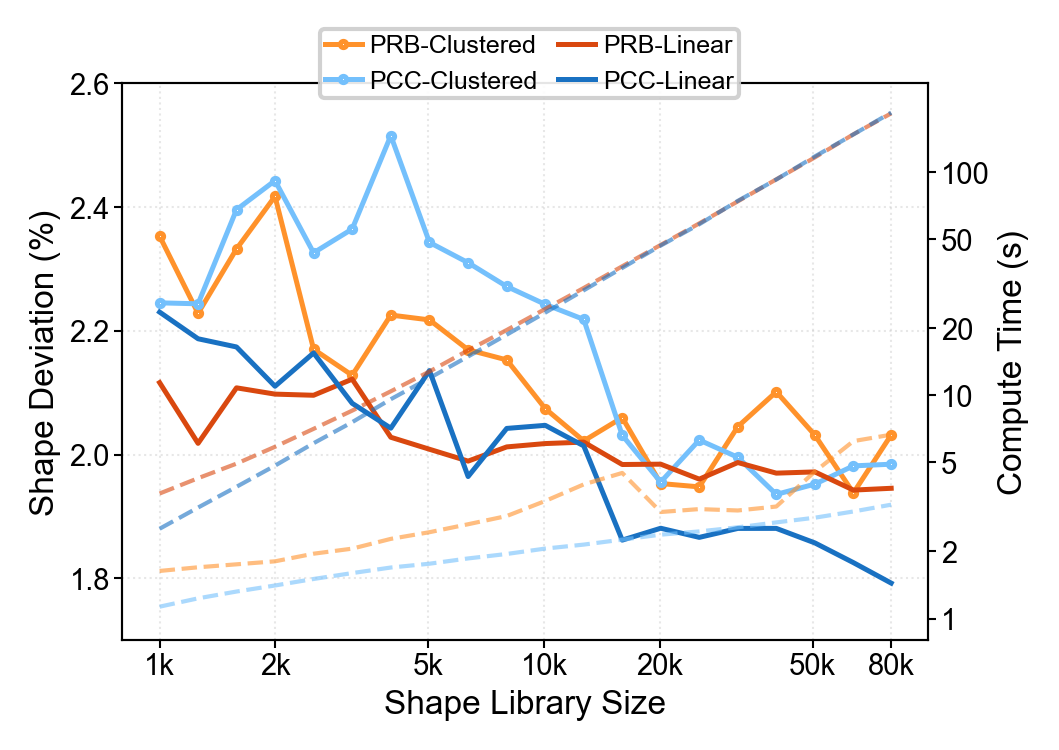}
    \caption{\textbf{Tradeoff between library size, accuracy, and computation time.} As $N_{lib}$ increases, shape deviation (solid lines) decreases with diminishing returns, while computation time (dashed lines) grows linearly for linear search. Clustering (light colors)} maintains computation time with small accuracy loss. Results shown for both PCC (blue) and PRB (orange).
    \label{fig:shapelibvstime}
\end{figure}

\subsubsection{Effect of Shape Library Size}
\label{subsubsec:shape_lib_size_effect}
Fig.~\ref{fig:shapelibvstime} illustrates the effect of the shape library size $N_{lib}$ on shape-following accuracy and computation time for both PCC and PRB models.
As $N_{lib}$ increases, shape deviation decreases with diminishing returns: rapid improvement is observed for small to moderate library sizes, followed by saturation as the library becomes sufficiently dense. 
This behavior is consistent with the resolution completeness analysis in Theorem~\ref{thm:completeness}, which predicts improved approximation of feasible shapes with increasing library resolution.
The computation time grows linearly with $N_{lib}$ for the linear variant. In contrast, the clustered variant maintains nearly constant computation time across library sizes while achieving comparable shape deviation, confirming the effectiveness of the two-pass search strategy (Section~\ref{sec:clustering_tradeoff}). The gap between PCC and PRB curves reflects the increased model complexity, but both exhibit the same qualitative trends. For time-critical applications, clustering provides an effective mechanism to maintain this accuracy with over an order of magnitude reduction in online computation.

\subsection{Hardware Demonstration}
\label{subsec:hardware_eval}

\subsubsection{Setup}
We demonstrate our method on a physical setup consisting of a 3-segment tendon-driven CR mounted on a 7-DOF manipulator (Franka Robotics). The CR has 3 tendons per segment with parallel tendon routing. Motion plans are generated using the MuJoCo PRB model from Section~\ref{subsec:mujoco_eval}.

\subsubsection{Execution}
The joint-level motion plan is post-processed using \texttt{toppra}~\cite{pham_2018_new_approach_time} for time-optimal path parameterization, reducing jerk while ensuring synchronized execution between the CR and manipulator. The resulting trajectories are sent to servo motors with built-in PID controllers (Dynamixel XL430-W250-T) for tendon actuation, and a joint-impedance controller for the serial manipulator. We executed representative paths from each of the three curve classes.

\subsubsection{Results}
Fig.~\ref{fig:hardware_setup} shows the hardware execution for C, S, and robot curve paths. The robot successfully tracks all paths, with the manipulator and CR moving in coordination throughout the trajectory. We measured tip tracking error using an optical tracker (Lyra, NDI, Canada), with a single marker attached to the CR tip and a four-marker rigid body defining the base frame relative to the manipulator base. Across the three paths, normalized by the CR length ($187.5$~mm), the C-shape achieved a mean error of $5.5\%$ (max $8.0\%$), the S-shape $9.1\%$ mean (max $20.6\%$), and the robot-curve shape $8.3\%$ mean (max $20.9\%$), with visible deviations between planned and executed shapes.

In simulation, the planner achieves zero tip error by construction, since waypoints are matched exactly against the same kinematic model used for execution. The hardware error therefore reflects the gap between this model and the physical system rather than a limitation of the planner itself. Unmodelled effects such as tendon stretch, friction, and hysteresis cause the physical robot to bend less than predicted, and because our pipeline executes plans open-loop from the system-identified model, these discrepancies accumulate without correction. The resulting hardware metrics are effectively a noisy version of the simulation results, with the noise attributable to system-specific factors (modeling fidelity, controller performance) that would not transfer across platforms. We therefore report these tip errors as a reference point for the specific hardware realization rather than as an evaluation of planner quality. The simulation experiments in Sections~\ref{subsec:pcc_eval}--\ref{subsec:mujoco_eval} isolate planner performance, while the hardware demonstration validates that our method produces smooth, executable trajectories that achieve qualitatively correct FTL motion on a physical manipulator-mounted CR system.

Closing this gap is orthogonal to the planning framework and can be approached from several directions: improving model fidelity through richer parametrizations of tendon-driven dynamics, more thorough system identification, generating the shape library directly from hardware-recorded shapes rather than from a model (which would absorb unmodelled effects into the library itself), or adding closed-loop shape control during execution. See the supplementary video for full execution sequences.

\begin{figure}[!t]
    \centering
    \includegraphics[width=6cm]{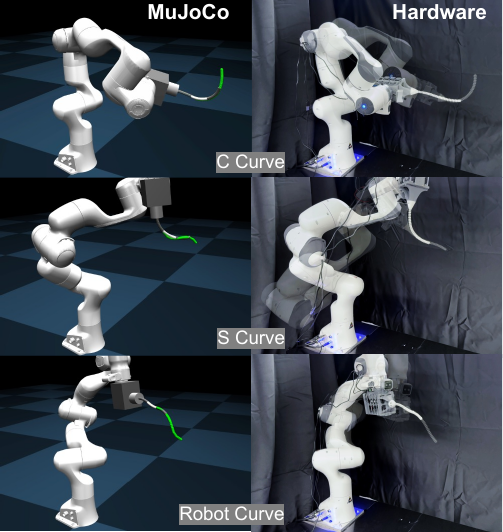}
    \caption{\textbf{Simulated (left) and hardware (right) evaluation} on three path classes. The setup consists of a 3-segment tendon-driven CR mounted on a 7-DOF manipulator. Hardware images overlay two frames: solid (mid-trajectory) and semi-transparent (final configuration), illustrating the coordinated base motion during FTL execution. See supplementary video for full execution sequences.}
    \label{fig:hardware_setup}
\end{figure}

\section{Conclusion}
\label{sec:conclusion}

This work introduced a sampling-based motion planning framework for follow-the-leader motion of manipulator-mounted continuum robots with full 6-DOF base pose control. By decoupling global shape search from local interpolation, the approach achieves exact tip tracking by design while supporting general forward models that would be impractical for iterative optimization. This decoupled structure also enables formal guarantees that optimization-based approaches cannot provide: resolution completeness from sampling-based search, exact tip tracking from geometric base pose construction, and asymptotic tip-tracking convergence from interpolation. User-adjustable clustering further provides a principled tradeoff between computation time and shape accuracy.

\subsection{Limitations and Future Directions}
\label{subsec:limitations_future}
The current work assumes the forward model is independent of base pose and external loading, whereas gravity can induce pose-dependent shape changes; pose-dependent shape prediction or active gravity compensation could address this. The per-step nature of shape search precludes trajectory-wide guarantees on shape-following quality; jointly optimizing across waypoints would provide stronger guarantees, though at the cost of substantially expanding the planning space. The uniform sampling strategy for library generation does not guarantee uniform coverage in shape space; importance sampling or adaptive refinement targeting shape space coverage could improve library quality for complex CRs without increasing library size. Finally, closed-loop shape control during execution, combined with more accurate forward models, would improve hardware execution fidelity.

More broadly, this work shows that FTL motion for manipulator-mounted CRs can be addressed without iterative optimization by decoupling shape search from base pose construction. We believe this opens the door to scalable FTL planning for increasingly complex CR systems, including hybrid rigid--continuum platforms.

\section*{Acknowledgments} 
We acknowledge the support of the Natural Sciences and Engineering Research Council of Canada (NSERC), [funding reference number RGPIN-2025-04343].
This research was also supported by the Canada Foundation for Innovation (CFI) through the John R. Evans Leaders Fund (JELF) [project number 40110].


\bibliographystyle{plainnat}
\bibliography{references}

\clearpage
\appendix
\section{Appendix}

\label{sec:appendix}

\setcounter{equation}{0}
\renewcommand{\theequation}{A\arabic{equation}}
\setcounter{figure}{0}
\renewcommand{\thefigure}{A\arabic{figure}}
\setcounter{table}{0}
\renewcommand{\thetable}{A.\Roman{table}}

In the Appendix, we first present ablation studies validating key design decisions (Section~\ref{app:ablation}), followed by full proofs for Lemma~\ref{lem:coverage} (Section~\ref{app:coverage_proof}), Lemma~\ref{lem:lipschitz} (Section~\ref{app:lipschitz_proof}), and Theorem~\ref{thm:asymptotic} (Section~\ref{app:asymptotic_proof}).

\subsection{Ablation Studies}
\label{app:ablation}
We present additional ablation studies to validate key design decisions of our framework. These are included in the appendix due to space constraints. All experiments use only the base Chamfer distance (Eqn.~\ref{eqn:shape_metric}) as the shape deviation metric. Tests are conducted on 45 paths (15 per class) initialized as in Section~\ref{subsubsec:PCCsetupTestpaths}.

\begin{figure}[!b]
    \centering
    \includegraphics[width=\columnwidth]{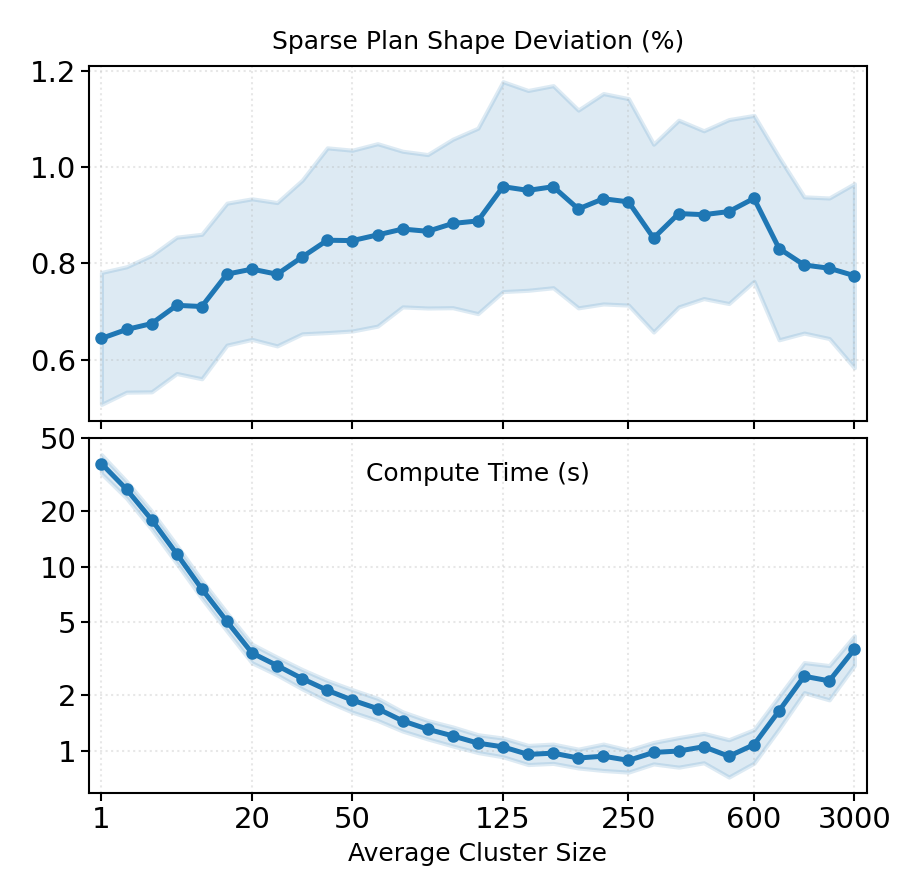}
    \caption{Effect of clustering on sparse plan shape deviation (top) and computation time (bottom) as a function of average cluster size. Smaller clusters approach linear search (lowest deviation, highest time); larger clusters reduce computation but increase deviation until very large clusters partially recover accuracy. Shaded regions indicate $\pm 1$ standard deviation.}
    \label{fig:cluster_ablation}
\end{figure}

\subsubsection{Effect of Clustering}
\label{app:clustering_ablation}
We evaluate the effect of clustering on shape-following accuracy and computation time. In this evaluation we use a 3-segment CR with PCC forward model and a library of $N_{lib} = 20{,}000$ shapes. To isolate the impact of shape search from interpolation effects, we measure shape deviation on the \emph{sparse plan only}. This is necessary because local interpolation introduces path-dependent variation: a suboptimal sparse plan can occasionally yield better interpolated trajectories.

Fig.~\ref{fig:cluster_ablation} shows sparse plan shape deviation and computation time as a function of average cluster size, which is controlled by the user-adjustable clustering threshold $\gamma$. A cluster size of 1 corresponds to $\gamma=0$, recovering linear search. As cluster size increases, shape deviation initially grows as expected: larger clusters increase the likelihood that the globally optimal shape resides in an unexplored cluster. At very large cluster sizes, deviation partially recovers as the library collapses into a small number of clusters, each spanning a substantial portion of the shape space. Computation time decreases rapidly with increasing cluster size but plateaus beyond moderate sizes. These results confirm that $\gamma$ provides an effective mechanism for trading accuracy against computation time based on application requirements.

\begin{figure}[!b]
    \centering
    \includegraphics[width=\columnwidth]{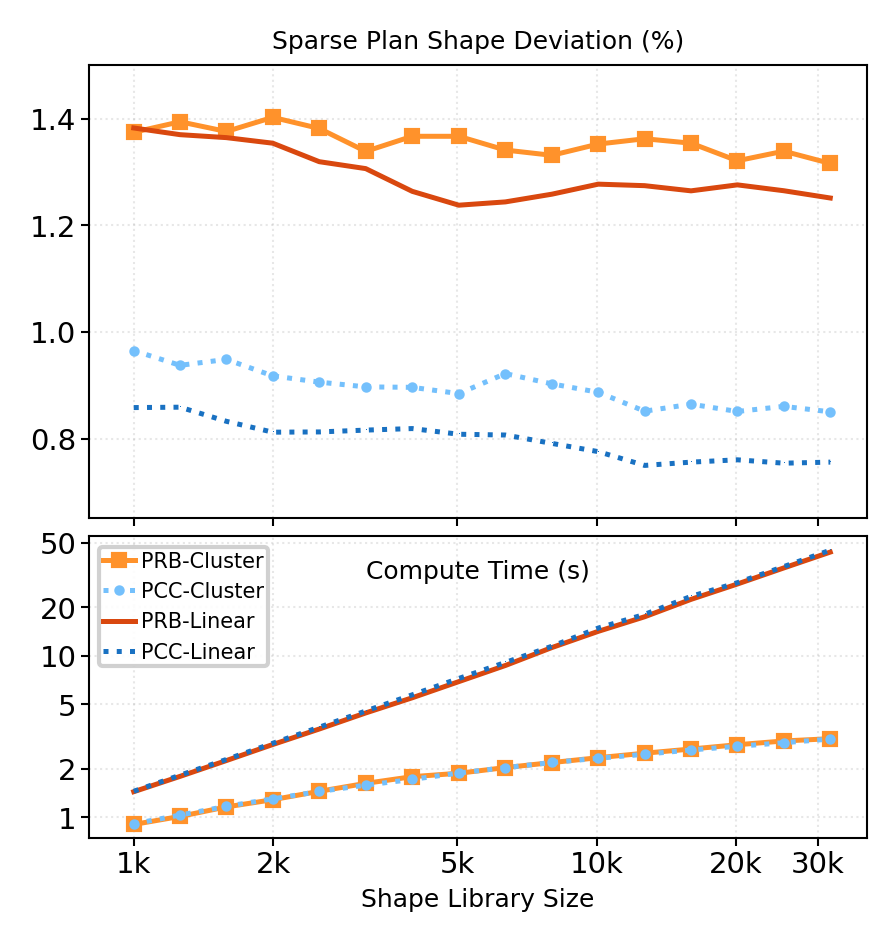}
    \caption{Effect of library size on sparse plan shape deviation and computation time. Without interpolation variance, trends are clearer: deviation decreases with diminishing returns, linear search time grows as $\mathcal{O}(N_{lib})$, and clustered search time grows as $\mathcal{O}(\sqrt{N_{lib}})$. PRB model shows higher deviation than PCC due to increased shape space complexity.}
    \label{fig:libsize_sparse_dev}
\end{figure}
\begin{table*}[!htb]
\caption{Effect of radial symmetry pre-alignment on overall shape deviation (\%). All variants use the same sparse plan; differences arise solely from interpolation quality.}
\label{tab:radial_ablation}
\begin{center}
\begin{tabular}{c|c|c|c}
& \multicolumn{3}{c}{\textbf{Shape Deviation (\%)} $\downarrow$} \\
\hline
Test Class 
& \multicolumn{1}{c|}{without Pre-alignment} 
 & \multicolumn{1}{c|}{with Discrete (3-Fold) Symmetry } 
 & \multicolumn{1}{c}{with Continuous (Full) Symmetry} \\
\hline
C Curves 
& 3.34 $\pm$ 1.80 
& 2.24 $\pm$ 0.76
& 2.10 $\pm$ 0.69 \\

S Curves 
& 3.08 $\pm$ 0.37 
& 2.26 $\pm$ 0.29 
& 2.19 $\pm$ 0.47 \\

Robot Curves 
& 2.24 $\pm$ 0.19 
& 2.04 $\pm$ 0.32 
& 1.85 $\pm$ 0.23 \\
\hline
Overall 
& 2.89 $\pm$ 1.11
& 2.18 $\pm$ 0.52
& 2.05 $\pm$ 0.52
\end{tabular}
\end{center}
\end{table*}

\subsubsection{Effect of Library Size on Sparse Plan}
\label{app:libsize_ablation}

The main paper (Fig.~\ref{fig:shapelibvstime}) reports overall deviation including interpolation, reflecting practical performance. Here, we measure sparse plan deviation only to isolate the effect of library size on shape search quality.

Fig.~\ref{fig:libsize_sparse_dev} shows shape deviation and computation time as a function of $N_{lib}$. Without interpolation noise, the trends are clearer: shape deviation decreases with diminishing returns as $N_{lib}$ increases. Computation time for linear search grows linearly ($\mathcal{O}(N_{lib})$), while the clustered variant follows approximately $\mathcal{O}(\sqrt{N_{lib}})$ growth with a modest accuracy tradeoff. The PRB model consistently exhibits higher shape deviation than PCC, reflecting its richer and more complex shape space. Both models exhibit the same qualitative trends, confirming the method's model-agnostic behavior.

\subsubsection{Effect of Radial Symmetry Pre-Alignment}
\label{app:radial_ablation}

We evaluate the effect of radial symmetry pre-alignment on overall shape deviation (including interpolation), comparing three configurations: no pre-alignment, discrete 3-fold symmetry (matching common tendon-driven CR designs), and continuous (full) symmetry. All three variants share the same sparse plan. All tests are conducted with a 3-segment CR with PCC forward model and a library of $N_{lib} = 20{,}000$ shapes.

Table~\ref{tab:radial_ablation} summarizes the results. Pre-alignment consistently improves shape deviation compared to no alignment, confirming that reducing base orientation differences between consecutive waypoints leads to smoother and more accurate interpolation. Full symmetry yields a modest further improvement over 3-fold, as it allows exact alignment to the reference direction rather than snapping to the nearest $120^\circ$ increment. The improvement from no alignment to 3-fold is substantially larger than from 3-fold to full, suggesting that even coarse symmetry exploitation captures the majority of the benefit. 

These results validate the use of radial symmetry as a pre-processing step for interpolation and suggest that for CRs with discrete tendon arrangements, discrete symmetry provides a practical approximation to full symmetry with minimal performance loss.

In summary, the ablation studies confirm that: (i) clustering 
provides a smooth tradeoff between speed and accuracy, (ii) library 
size follows diminishing returns consistent with resolution 
completeness, and (iii) radial symmetry pre-alignment 
significantly improves interpolation quality with even 
discrete symmetry capturing most of the benefit.

\subsection{Proof of Lemma~\ref{lem:coverage} (Coverage Probability)}
\label{app:coverage_proof}
\begin{proof}
Let $\mathcal{N}(\mathcal{Q}, \epsilon_C)$ denote the $\epsilon_C$-covering number of $\mathcal{Q}$, i.e., the minimum number of balls of radius $\epsilon_C$ needed to cover $\mathcal{Q}$. For compact $\mathcal{Q} \subset \mathbb{R}^d$:
\begin{equation}
    M := \mathcal{N}(\mathcal{Q}, \epsilon_C) \leq \left(\frac{\text{diam}(\mathcal{Q})}{\epsilon_C} + 1\right)^d
\end{equation}
Let $\{c_1, \ldots, c_M\}$ be the centers of a minimal $\epsilon_C$-cover, and define $B_j = B(c_j, \epsilon_C) \cap \mathcal{Q}$. Each $B_j$ has volume at least $V_{min} = c_d \epsilon_C^d$ where $c_d$ is a dimension-dependent constant.
For uniform sampling over $\mathcal{Q}$, the probability that a single sample falls in $B_j$ is:
\begin{equation}
    p_j = \frac{\text{Vol}(B_j)}{\text{Vol}(\mathcal{Q})} \geq p_{min} := \frac{c_d \epsilon_C^d}{\text{Vol}(\mathcal{Q})}
\end{equation}
The probability that none of $N$ i.i.d.\ samples falls in $B_j$ is $(1 - p_j)^N \leq (1 - p_{min})^N$. By the union bound:
\begin{equation}
    \mathbb{P}[\exists j: B_j \cap \mathcal{L}_N = \emptyset] \leq M(1 - p_{min})^N
\end{equation}
For this probability to be at most $\delta$, we require:
\begin{equation}
    M(1 - p_{min})^N \leq \delta
\end{equation}
Using $\ln(1-x) \leq -x$ for $x \in (0,1)$:
\begin{equation}
    N \geq \frac{\ln(M/\delta)}{p_{min}} = \frac{\text{Vol}(\mathcal{Q})}{c_d \epsilon_C^d} \ln\left(\frac{M}{\delta}\right)
\end{equation}
Setting $N_0 = \lceil \frac{\text{Vol}(\mathcal{Q})}{c_d \epsilon_C^d} \ln(\frac{M}{\delta}) \rceil$ completes the proof.
\end{proof}

\subsection{Expanded Proof of Lemma~\ref{lem:lipschitz} (Lipschitz Continuity of Shape Deviation)}
\label{app:lipschitz_proof}

\begin{proof}
We show that each component of the pipeline $\mathbf{q} \mapsto E(\mathbf{q}, \mathcal{W}_i)$ is Lipschitz continuous.

\textbf{Step 1: Forward model.} 
By A1--A2, the forward model $f: \mathcal{Q} \to \mathbb{R}^{(D+1)\times 3}$ is continuously differentiable on the compact domain $\mathcal{Q}$. Therefore, the gradient $\nabla f$ is continuous on $\mathcal{Q}$ and bounded by compactness, yielding Lipschitz continuity with constant $L_f = \sup_{\mathbf{q} \in \mathcal{Q}} \|\nabla f(\mathbf{q})\|$.

\textbf{Step 2: Base pose construction.} We show each transformation is Lipschitz in the shape points:
\begin{itemize}
    \item \textit{T1 (Translation):} The translation vector $(\mathbf{w}_i - \mathbf{p}_D)$ is linear in $\mathbf{p}_D$, hence Lipschitz with constant 1.
    
    \item \textit{T2 (Rotation about tip):} The rotation aligns direction $\mathbf{v}_2$ to $\mathbf{v}_1$ by rotating about axis $\mathbf{v} = \mathbf{v}_2 \times \mathbf{v}_1$ by angle $\theta = \cos^{-1}(\mathbf{v}_1 \cdot \mathbf{v}_2)$. We show each component is Lipschitz:

\begin{itemize}
    \item \textit{Unit vector $\mathbf{v}_2$:} $\mathbf{v}_2 = (\mathbf{w}_i - \mathbf{p}_{m^*})/\|\mathbf{w}_i - \mathbf{p}_{m^*}\|$. The normalization $\mathbf{x} \mapsto \mathbf{x}/\|\mathbf{x}\|$ has gradient $(\mathbf{I} - \hat{\mathbf{x}}\hat{\mathbf{x}}^\top)/\|\mathbf{x}\|$, which is bounded by $1/\|\mathbf{x}\|$. Thus normalization is Lipschitz with constant $1/\delta$ on the set $\{\mathbf{x} : \|\mathbf{x}\| \geq \delta\}$. Since $\mathbf{p}_{m^*}'$ is the base of the active shape (after T1 translation) and $\mathbf{w}_i$ is the tip waypoint, their distance equals the arc length of the active path, which is strictly positive. Hence $\delta > 0$ exists depending on path geometry.
    
    \item \textit{Rotation angle $\theta$:} $\theta = \cos^{-1}(\mathbf{v}_1 \cdot \mathbf{v}_2)$. The dot product $\mathbf{v}_1 \cdot \mathbf{v}_2$ is Lipschitz in $\mathbf{v}_2$ with constant $\|\mathbf{v}_1\| = 1$. The function $\cos^{-1}: [-1,1] \to [0,\pi]$ has derivative $-1/\sqrt{1-x^2}$, which is bounded on $[-1+\epsilon, 1-\epsilon]$ for any $\epsilon > 0$. This composition is Lipschitz except when $\mathbf{v}_1$ and $\mathbf{v}_2$ are exactly collinear, which does not occur for generic paths and shapes.
    
    \item \textit{Rotation axis $\mathbf{v}$:} $\mathbf{v} = \mathbf{v}_2 \times \mathbf{v}_1$. The cross product is bilinear, hence Lipschitz in $\mathbf{v}_2$ with constant $\|\mathbf{v}_1\| = 1$. Note that when $\theta \to 0$ or $\theta \to \pi$, $\|\mathbf{v}\| \to 0$, but the rotation itself becomes trivial (identity or $\pi$-rotation about any perpendicular axis), so the resulting pose varies continuously.
    
    \item \textit{Rotation matrix:} Given axis $\mathbf{v}$ and angle $\theta$, the Rodrigues formula gives $\mathbf{R} = \mathbf{I} + \sin\theta[\mathbf{v}]_\times + (1-\cos\theta)[\mathbf{v}]_\times^2$. This is smooth (hence Lipschitz) in both $\theta$ and $\mathbf{v}$ away from degeneracies.
\end{itemize}

Composing these, T2 is Lipschitz in the shape points on the set of non-degenerate configurations.
    
    \item \textit{T3 (Axial rotation):} The rotation resolves the remaining degree of freedom by aligning a third correspondence point. We show each component is Lipschitz:

\begin{itemize}
    \item \textit{Path axis $\mathbf{a}$:} $\mathbf{a} = (\mathbf{w}_i - \mathbf{w}_1)/\|\mathbf{w}_i - \mathbf{w}_1\|$ is fixed by the path geometry (independent of shape).
    
    \item \textit{Third correspondence point $\mathbf{p}_k$:} The shape point $\mathbf{p}_k$ is selected by arc length ratio, which is a continuous (linear interpolation) operation on the shape points. Hence $\mathbf{p}_k$ is Lipschitz in the shape.
    
    \item \textit{Projection onto orthogonal plane:} The projection $\mathbf{u} \mapsto \mathbf{u} - (\mathbf{u} \cdot \mathbf{a})\mathbf{a}$ is linear, hence Lipschitz with constant 1.
    
    \item \textit{Rotation angle $\phi$:} Let $\mathbf{u}_w = (\mathbf{w}_k - \mathbf{w}_1) - ((\mathbf{w}_k - \mathbf{w}_1) \cdot \mathbf{a})\mathbf{a}$ and $\mathbf{u}_p = (\mathbf{p}_k - \mathbf{w}_1) - ((\mathbf{p}_k - \mathbf{w}_1) \cdot \mathbf{a})\mathbf{a}$ be the projected vectors. The angle $\phi$ is computed via $\mathrm{atan2}$ from these projections. This is Lipschitz whenever both projected vectors have non-zero magnitude, which holds when $\mathbf{w}_k$ is not collinear with $\mathbf{w}_1$ and $\mathbf{w}_i$. By construction, $\mathbf{w}_k$ is chosen to maximize distance to the line through $\mathbf{w}_1$ and $\mathbf{w}_i$, ensuring non-collinearity for any path with at least three non-collinear waypoints.
    
    \item \textit{Rotation matrix:} As in T2, the Rodrigues formula yields a rotation matrix that is smooth in $\phi$ and the fixed axis $\mathbf{a}$.
\end{itemize}

Composing these, T3 is Lipschitz in the shape points for paths with at least three non-collinear waypoints.
\end{itemize}

\textbf{Step 3: Shape deviation.} The Chamfer distance (\ref{eqn:shape_metric}) involves nearest-neighbor queries:
\begin{equation}
    E = \frac{1}{|\mathcal{W}_i|} \sum_{\mathbf{x} \in \mathcal{W}_i} \min_{\mathbf{y} \in \mathbf{P}_{act}} \|\mathbf{x} - \mathbf{y}\| + \frac{1}{|\mathbf{P}_{act}|} \sum_{\mathbf{y} \in \mathbf{P}_{act}} \min_{\mathbf{x} \in \mathcal{W}_i} \|\mathbf{y} - \mathbf{x}\|
\end{equation}
The function $\mathbf{y} \mapsto \min_{\mathbf{x} \in \mathcal{W}_i} \|\mathbf{y} - \mathbf{x}\|$ is Lipschitz with constant 1 (distance to a fixed set). Summation and averaging preserve Lipschitz continuity.

\textbf{Step 4: Composition.} The composition of Lipschitz functions is Lipschitz. Therefore, $E(\mathbf{q}, \mathcal{W}_i)$ is Lipschitz in $\mathbf{q}$ with constant $L_E = L_f \cdot L_{T} \cdot 1$, where $L_{T}$ depends on the path geometry and is finite for all non-degenerate paths.

The Lipschitz continuity of the base pose construction holds under the following non-degeneracy conditions: (i) the active shape has non-zero arc length, (ii) the path contains at least three non-collinear waypoints, and (iii) the shape direction $\mathbf{v}_2$ is not exactly collinear with the path direction $\mathbf{v}_1$. These conditions are satisfied for all practical paths and generic shape library configurations.
\end{proof}

\subsection{Proof of Theorem~\ref{thm:asymptotic} (Asymptotic Tip-Tracking Convergence})
\label{app:asymptotic_proof}
\begin{proof}
Consider the interval between interpolation steps $k$ and $k+1$, parameterized by $\beta \in [0,1]$. Let:
\begin{itemize}
    \item $\mathbf{q}(\beta) = (1-\beta) \, {}^k\!\mathbf{q} + \beta \, {}^{k+1}\!\mathbf{q}$ be the linearly interpolated configuration
    \item $\mathbf{p}_{tip}(\beta)$ be the actual tip position after applying FM and base transformation
    \item $\mathbf{p}_{ftl}(\beta) = (1-\beta) \, {}^k\!\textbf{w} + \beta \, {}^{k+1}\!\textbf{w}$ be the interpolated desired tip trajectory
\end{itemize}
\textbf{Step 1: Boundary conditions.} By Theorem~\ref{thm:tip_exact}:
\begin{align}
    \mathbf{p}_{tip}(0) &= {}^k\!\textbf{w} = \mathbf{p}_{ftl}(0) \\
    \mathbf{p}_{tip}(1) &= {}^{k+1}\!\textbf{w} = \mathbf{p}_{ftl}(1)
\end{align}
\textbf{Step 2: Smoothness.} Since FM is continuously differentiable (A2) on compact $\mathcal{Q}$ (A1), $\nabla f$ is bounded and $f$ is Lipschitz. The base pose construction is continuous in the shape. Therefore $\mathbf{p}_{tip}(\beta)$ is a smooth function of $\beta$.

\noindent\textbf{Step 3: Taylor expansion.} Define the error 
$\mathbf{e}(\beta) = \mathbf{p}_{tip}(\beta) - \mathbf{p}_{ftl}(\beta)$. 
We have $\mathbf{e}(0) = \mathbf{e}(1) = \mathbf{0}$. 
For smooth scalar functions with homogeneous boundary 
conditions on $[0,1]$, the maximum deviation satisfies 
$\max |g| \leq \frac{1}{8} \max |g''|$. Applying this 
component-wise to $\mathbf{e}(\beta) \in \mathbb{R}^3$ and 
bounding each $|e_i''(\beta)| \leq \|\mathbf{e}''(\beta)\|$ yields:
\begin{equation}
    \max_{\beta \in [0,1]} \|\mathbf{e}(\beta)\| \leq 
    \frac{\sqrt{3}}{8} \max_{\beta} \|\mathbf{e}''(\beta)\|
\end{equation}
The $\sqrt{3}$ factor is absorbed into the constant $C$ in the 
final bound.

\textbf{Step 4: Bounding the second derivative.} The second derivative $\mathbf{e}''(\beta)$ depends on:
\begin{itemize}
    \item The Hessian of FM with respect to configuration
    \item The rate of change of base pose with respect to shape
\end{itemize}
Both are bounded by constants depending on FM smoothness. The configuration change per interpolation step is $\|{}^{k+1}\!\mathbf{q} - {}^k\!\mathbf{q}\| \leq \Delta_{max}/h$ where $\Delta_{max} = \max_j \|\mathbf{q}_{j+1} - \mathbf{q}_j\|$ bounds the change between waypoints.
Since $\mathbf{e}''$ scales with the square of configuration velocity:
\begin{equation}
    \|\mathbf{e}''(\beta)\| \leq C' \left(\frac{\Delta_{max}}{h}\right)^2
\end{equation}
\textbf{Step 5: Final bound.} Combining:
\begin{equation}
    \max_{\beta} \|\mathbf{e}(\beta)\| \leq \frac{\sqrt{3}C'}{8} \cdot \frac{\Delta_{max}^2}{h^2} = \frac{C}{h^2} \Delta_{max}^2
\end{equation}
where $C = \sqrt{3}C'/8$ absorbs the constants.
\end{proof}

\end{document}